\newcommand\loaddata[1]{\CatchFileDef\loadeddata{#1}{\endlinechar=-1}}
\pgfplotsset{compat=1.3}
\newcommand{\parheading}[1]{\vspace{1mm}\noindent\textit{#1}}
\newtheorem{prop}{Proposition}
\newtheorem{theorem}{Theorem}
\newtheorem{lemma}{Lemma}
\newtheoremstyle{myremark}
{\topsep} 
{\topsep} 
{\normalfont} 
{} 
{\bfseries} 
{.} 
{5pt plus 1pt minus 1pt} 
{\thmname{#1}\thmnumber{ #2}\thmnote{ (#3)}} 
\theoremstyle{myremark}
\theoremstyle{definition}
\newtheorem{defn}{Definition}
\newtheorem{property}{Property}
\DeclareMathOperator*{\argmax}{arg\,max}
\newcommand{\SCN}{\mathrm{SCN}}
\newcommand{\SCONE}{\mathrm{SCoNe}}
\newcommand{\ReLU}{\mathrm{ReLU}}
\newcommand{\image}{\mathrm{im}}
\newcommand{\kernel}{\mathrm{ker}}
\newcommand{\ie}{\textit{i.e.}}
\newcommand{\eg}{\textit{e.g.}}
\newcommand{\cf}[1]{[\textit{cf.}~#1]}
\newcommand{\bigO}{\mathcal{O}}
\newcommand{\R}{\mathbb{R}}
\newcommand{\Rn}[1]{\mathbb{R}^{#1}}
\newcommand{\Rnn}[1]{\mathbb{R}^{#1\times #1}}
\newcommand{\Rnp}[2]{\mathbb{R}^{#1\times #2}}
\newcommand{\bA}{\mathbf{A}}
\newcommand{\bB}{\mathbf{B}}
\newcommand{\bD}{\mathbf{D}}
\newcommand{\bI}{\mathbf{I}}
\newcommand{\bL}{\mathbf{L}}
\newcommand{\bP}{\mathbf{P}}
\newcommand{\bW}{\mathbf{W}}
\newcommand{\cC}{\mathcal{C}}
\newcommand{\cD}{\mathcal{D}}
\newcommand{\cE}{\mathcal{E}}
\newcommand{\cN}{\mathcal{N}}
\newcommand{\cP}{\mathcal{P}}
\newcommand{\cV}{\mathcal{V}}
\newcommand{\cX}{\mathcal{X}}
\newcommand{\bc}{\mathbf{c}}
\newcommand{\bd}{\mathbf{d}}
\newcommand{\bw}{\mathbf{w}}
\newcommand{\bx}{\mathbf{x}}
\newcommand{\by}{\mathbf{y}}
\newcommand{\bz}{\mathbf{z}}
\icmltitlerunning{Principled Simplicial Neural Networks for Trajectory Prediction}
\begin{document}

\twocolumn[
\icmltitle{Principled Simplicial Neural Networks for Trajectory Prediction}

\icmlsetsymbol{equal}{*}

\begin{icmlauthorlist}
  \icmlauthor{T.~Mitchell~Roddenberry}{equal,ece}
  \icmlauthor{Nicholas~Glaze}{equal,ece}
  \icmlauthor{Santiago~Segarra}{ece}
\end{icmlauthorlist}

\icmlaffiliation{ece}{Department of Electrical and Computer Engineering, Rice University, Houston, Texas, USA}

\icmlcorrespondingauthor{TMR}{\href{mailto:mitch@rice.edu}{mitch@rice.edu}}
\icmlcorrespondingauthor{NG}{\href{mailto:nkg2@rice.edu}{nkg2@rice.edu}}
\icmlcorrespondingauthor{SS}{\href{mailto:segarra@rice.edu}{segarra@rice.edu}}

\icmlkeywords{}

\vskip 0.3in
]

\printAffiliationsAndNotice{\icmlEqualContribution}

\begin{abstract}
We consider the construction of neural network architectures for data on simplicial complexes. 
In studying maps on the chain complex of a simplicial complex, we define three desirable properties of a simplicial neural network architecture: namely, permutation equivariance, orientation equivariance, and simplicial awareness. 
The first two properties respectively account for the fact that the indexing and orientations of simplices in a simplicial complex are arbitrary. 
The last property requires the output of the neural network to depend on the entire simplicial complex and not on a subset of its dimensions. 
Based on these properties, we propose a simple convolutional architecture, rooted in tools from algebraic topology, for the problem of trajectory prediction, and show that it obeys all three of these properties when an odd, nonlinear activation function is used. 
We then demonstrate the effectiveness of this architecture in extrapolating trajectories on synthetic and real datasets, with particular emphasis on the gains in generalizability to unseen trajectories.
\end{abstract}

\section{Introduction}

Graph neural networks have shown great promise in combining the representational power of neural networks with the structure imparted by a graph.
In essence, graph neural networks compute a sequence of node representations by aggregating information at each node from its neighbors and itself, then applying a nonlinear transformation.
Using variations on this approach, many architectures have exhibited state-of-the-art performance in tasks including node classification~\citep{Velickovic:2018}, link prediction~\citep{Zhang:2018}, and graph classification~\citep{Hamilton:2017}.
Indeed, the strength of graph neural networks lies in their ability to incorporate arbitrary pairwise relational structures in their computations.

However, not all data is adequately expressed in terms of pairwise relationships, nor is it strictly supported on the nodes of a graph.
Interactions in a social network, for instance, do not solely occur in a pairwise fashion but also among larger groups of people.
This warrants a higher-order model in order to represent rich and complex datasets.
One such higher-order model is the (abstract) simplicial complex, which describes relational structures that are closed under restriction: if three people are all friends together, then each pair of people in that group are also friends.
We can understand data supported on simplicial complexes using tools from algebraic topology~\citep{Hatcher:2005,Carlsson:2009,Ghrist:2014}; 
in particular, we can analyze data supported on the edges and higher-order structures using the spectrum of certain linear operators on the simplicial complex.
This is analogous to the use of spectral graph theory~\citep{Chung:1997} to understand the smoothness of data supported on the nodes of a graph~\citep{Shuman:2013}.

\subsection{Contribution}

We study the extension of graph convolutional network architectures to process data supported on simplicial complexes.
After establishing appropriate operators for such data in \cref{sec:background}, our first contribution is to define in \cref{sec:admissible} a notion of admissibility, in terms of three reasonable properties that we require of neural networks for simplicial complexes.
We then focus on the problem of trajectory prediction over a simplicial complex, proposing a simple convolutional neural architecture in \cref{sec:scone}, which we design with admissibility in mind.
In particular, we show that the activation functions of the proposed architecture must be odd and nonlinear in order to satisfy our requirements for admissibility.
We empirically illustrate how admissibility yields better generalizability to unseen trajectories in \cref{sec:experiments}.

\section{Related work}

\subsection{Graph Neural Networks}

Graph neural networks extend the success of convolutional neural networks for Euclidean data to the graph domain, adapting the weight-sharing of convolutional networks in a way that reflects the underlying graph structure.
Effectively, graph neural networks interleave nonlinear activation functions with diffusion operators dictated by the graph structure, such as the adjacency or Laplacian matrices.
\citet{Bruna:2014} developed graph neural networks in the Laplacian spectral domain, which was further simplified by \citet{Defferrard:2016}, who expressed the diffusion operator at each layer as a low-order Chebyshev polynomial in order to improve scalability.
\citet{Kipf:2017} reduced this further, employing a first-order diffusion operator at each layer expressed in terms of the graph Laplacian.
We refer the reader to \citet{Wu:2020} for a survey of graph neural network architectures and applications.

\subsection{Signal Processing on Simplicial Complexes}

Aiming to extend the field of signal processing on graphs~\citep{Shuman:2013}, recent works have used tools from algebraic topology to understand data supported on simplicial complexes.
Rooted in discrete calculus on graphs, and in particular combinatorial Hodge theory~\citep{Jiang:2011,Lim:2020}, recent works have considered the use of the Hodge Laplacian for the analysis of flows on graphs~\citep{Schaub:2018,Barbarossa:2018,Barbarossa:2020a,Barbarossa:2020b}.
This line of research was distinct from existing developments in graph signal processing, in that it handled flows defined with respect to an arbitrary orientation assigned to each edge, much like the analysis of current in an electrical circuit.
One application of this perspective was studied by \citet{Jia:2019}, where the problem of flow interpolation was cast as an optimization problem, minimizing the quadratic form of the Hodge Laplacian subject to observation constraints.

\subsection{Simplicial Neural Networks}

The first application of discrete Hodge theory to the design of neural networks was proposed by~\citet{Roddenberry:2019}.
This work analyzed edge-flow data, focusing on the problems of flow interpolation and source localization on simplicial complexes, while drawing attention to the value of permutation and orientation equivariance.
Since then, other works~\citep{Ebli:2020,Bunch:2020} have also relied on discrete Hodge theory to propose convolutional neural architectures for data supported on simplicial complexes.
However, these works do not address important notions of orientation, which we discuss in~\cref{sec:admissible}.
Instead of solely proposing a \emph{specific} neural architecture, our current work develops a principled \emph{framework} to construct generalizable simplicial neural network architectures, proposes a simple architecture following those principles, and illustrates its advantages compared with competing approaches.

\section{Background}\label{sec:background}

\subsection{Simplicial Complexes}

An \emph{abstract simplicial complex} is a set $\cX$ of finite subsets of another set $\cV$ that is closed under restriction, \ie{} for all $\sigma\in\cX$, if $\sigma'\subseteq\sigma$, we have $\sigma'\in\cX$.
Each such element of $\cX$ is called a \emph{simplex}: in particular, if $|\sigma|=k+1$, we call $\sigma$ a \emph{$k$-simplex}.
For a $k$-simplex $\sigma$, its \emph{faces} are all of the $(k-1)$-simplices that are also subsets of $\sigma$, while its \emph{cofaces} are all $(k+1)$-simplices that have $\sigma$ as a face.

Grounding these definitions in our intuition for graphs, we refer to the elements of $\cV$ as \emph{nodes}, or equivalently the $0$-simplices of $\cX$.
We refer to the $1$-simplices of $\cX$ as \emph{edges}, and the $2$-simplices as \emph{triangles}, corresponding to ``filled-in triangles'' in a departure from classical graphs.
The edges, then, are faces of the triangles, with the nodes being faces of the edges.
We refer to higher-order simplices by their order: $k$-simplices.
For convenience, we use the notation $\cX_k$ to refer to the collection of $k$-simplices of $\cX$, \eg{}, $\cX_0=\cV$.
The \emph{dimension} of a simplicial complex $\cX$ is the maximal $k$ such that $\cX_k$ is nonempty.
The \emph{$k$-skeleton} of a simplicial complex is the union $\bigcup_{\ell=0}^k\cX_\ell$.
We will later find it convenient to refer to the \emph{neighborhood} of a node $i$: for a simplicial complex $\cX$, denote by $\cN(i)$ the set of all $j\in\cX_0$ such that $\{i,j\}\in\cX_1$, \ie{}, the set of nodes connected to $i$ by an edge.
Or, using our defined terminology for simplicial complexes, the neighborhood of a node $i$ consists of the faces of the cofaces of $i$, excluding $i$ itself.

We arbitrarily endow each simplex of $\cX$ with an \emph{orientation}.
An orientation can be thought of as a chosen ordering of the constituent elements of a simplex, modulo even permutations.
That is, for a $k$-simplex $\sigma=\{i_0,i_1\ldots,i_k\}\subseteq\cV$, an orientation of $\sigma$ would be $[i_0,i_1,\ldots,i_k]$.
Performing an even permutation of this (\eg{}, $[i_1,i_2,i_0,\ldots,i_k]$) leads to an equivalent orientation.
For convenience, we label the nodes with the non-negative integers, and let the chosen orientations of all simplices be given by the ordering induced by the node labels.

\subsection{Boundary Operators and Hodge Laplacians}

Denote by $\cC_k$ the vector space with the oriented $k$-simplices of $\cX$ as a canonical orthonormal basis, defined over the field of real numbers.
We define notions of matrix multiplication and linear operators, \eg{}, diagonal matrices and permutation matrices, with respect to this basis.
Each element of $\cC_k$ is called a \emph{$k$-chain} and is subject to all the properties of a vector.
In particular, multiplying a $k$-chain by $+1$ is idempotent, while multiplication by $-1$ reverses its orientation (by reversing the orientations of its basis vectors).
For instance, let $[i_0,i_1]\in\cC_1$.
We then have $[i_0,i_1]=-[i_1,i_0]$.
Using this basis of oriented simplices, we endow the vector spaces $\cC_k$ with the usual properties of finite-dimensional real vector spaces such as inner products and linear maps expressed as real matrices.

A particular set of linear maps between chains is the set of \emph{boundary operators\footnote{We ignore the boundary operator $\partial_0$, which maps $\cC_0\to\{0\}$.}} $\{\partial_k\}_{k=1}^K$, where $K$ is the highest order of any simplex in $\cX$.
For an oriented $k$-simplex $\sigma=[i_0,i_1,\ldots,i_k]$, the boundary operator $\partial_k:\cC_k\to\cC_{k-1}$ is defined as
\begin{equation}\label{eq:boundary}
  \partial_k\sigma = \sum_{j=0}^k(-1)^j[i_0,\ldots,i_{j-1},i_{j+1},\ldots,i_k].
\end{equation}
That is, the boundary operator takes an ordered, alternating sum of the faces that form the boundary of $\sigma$.
The collection of these vector spaces coupled with the boundary maps forms what is known as a \emph{chain complex}.
Indeed, to represent a simplicial complex it is sufficient to use its boundary maps.
An important result in algebraic topology relates these boundary operators to each other.
\begin{lemma}\label{lem:fun-thm-alg-top}
  The boundary operator squared is null.
  That is, for all $k$, $\partial_{k-1}\circ\partial_k=0$.
\end{lemma}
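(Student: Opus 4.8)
The plan is to verify the identity on the canonical basis of $\cC_k$ and then extend by linearity, since both $\partial_{k-1}$ and $\partial_k$ are linear maps. Thus it suffices to show $\partial_{k-1}(\partial_k \sigma) = 0$ for an arbitrary oriented $k$-simplex $\sigma = [i_0, i_1, \ldots, i_k]$. First I would apply the definition \eqref{eq:boundary} to expand $\partial_k \sigma$ as an alternating sum of $(k-1)$-simplices, then apply $\partial_{k-1}$ to each resulting term, obtaining a double sum of $(k-2)$-simplices indexed by the two omitted vertices.

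The crux is a sign-cancellation argument. Every $(k-2)$-simplex appearing in the double sum is obtained from $\sigma$ by deleting two of its vertices, say $i_p$ and $i_q$ with $p < q$, and I would show that each such simplex arises in exactly two ways: once by deleting $i_p$ in the outer application of the boundary and $i_q$ in the inner one, and once by deleting $i_q$ first and $i_p$ second. The main technical point, and the step where care is required, is tracking how the sign of the second deletion depends on the position of that vertex within the already-reduced simplex. Concretely, when the first deletion removes the vertex at index $j$, a vertex originally at index $\ell > j$ shifts down to position $\ell - 1$ and contributes sign $(-1)^{\ell - 1}$, whereas a vertex at index $\ell < j$ retains its position and contributes $(-1)^{\ell}$.

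Carrying out this bookkeeping for the pair $(i_p, i_q)$, deleting $i_p$ first yields total sign $(-1)^{p}\cdot(-1)^{q-1} = (-1)^{p+q-1}$, while deleting $i_q$ first yields $(-1)^{q}\cdot(-1)^{p} = (-1)^{p+q}$. These signs are opposite, so the two copies of the $(k-2)$-simplex $[i_0, \ldots, \hat{i_p}, \ldots, \hat{i_q}, \ldots, i_k]$ cancel. Since this holds for every pair $p < q$, the entire double sum vanishes, giving $\partial_{k-1}(\partial_k \sigma) = 0$ on each basis element and hence on all of $\cC_k$.

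I expect the only genuine obstacle to be the index-shift bookkeeping above; once the two signs are pinned down the cancellation is immediate, and the extension by linearity is routine. Intuitively, the identity $\partial_{k-1} \circ \partial_k = 0$ is the discrete form of the statement that \emph{the boundary of a boundary is empty}, reflecting that each $(k-2)$-face of $\sigma$ is traversed twice with opposite induced orientation.
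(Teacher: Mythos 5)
Your proof is correct, and the sign bookkeeping---the only delicate point---is handled properly: for a pair $p<q$, deleting $i_p$ first and then $i_q$ (which has shifted to position $q-1$) gives sign $(-1)^{p}(-1)^{q-1}$, while deleting $i_q$ first and then $i_p$ (still at position $p$) gives $(-1)^{q}(-1)^{p}$, so the two occurrences of each $(k-2)$-face cancel and the double sum vanishes. For comparison: the paper never proves \cref{lem:fun-thm-alg-top}; it is stated as a background fact imported from algebraic topology, and the closest the paper comes to a verification is the remark in its supplementary material that the explicit incidence-matrix representations satisfy $\bB_1\bB_2=0$, which is only the instance $\partial_1\circ\partial_2=0$ checked against concrete matrices. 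Your argument is the standard general proof (expand on the canonical basis of oriented simplices via \eqref{eq:boundary}, pair up the terms of the double sum by which two vertices are omitted, and extend by linearity), so it both fills the gap the paper leaves open and covers all $k$ at once, which is what the lemma as stated requires.
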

Additionally, the boundary operator induces a \emph{co-boundary operator}, which is the adjoint of $\partial$, \ie{} $\partial_k^\top$.
Based on the boundary and coboundary operators, we define the \emph{$k^\mathrm{th}$ Hodge Laplacian} as
\begin{equation}
  \Delta_k=\partial_k^\top\partial_k+\partial_{k+1}\partial_{k+1}^\top.
\end{equation}
Since the linear map $\partial_1$ can be represented via the signed node-edge incidence matrix, one can check that $\Delta_0$ recovers the graph Laplacian (see supplementary material for details).
More generally, $\Delta_k:\cC_k\to\cC_k$ is a linear operator on the space of $k$-chains for each $k$.
The \emph{Hodge Decomposition} allows us to view the vector space $\cC_k$ in terms of the boundary maps and the Hodge Laplacian.
\begin{theorem}[Hodge Decomposition]\label{thm:hodge}
  For a simplicial complex $\cX$ with boundary maps $\partial=\{\partial_k\}_{k=1}^K$ we have that
  \begin{equation}
    \cC_k=\image(\partial_{k+1})\oplus\image(\partial_k^\top)\oplus\kernel(\Delta_k)
  \end{equation}
  for all $k$, where $\oplus$ represents the (orthogonal) direct sum.
\end{theorem}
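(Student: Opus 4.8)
The plan is to reduce the claim to the standard orthogonal splitting $\cC_k = W \oplus W^\perp$ of a finite-dimensional inner product space, where I take $W = \image(\partial_{k+1}) \oplus \image(\partial_k^\top)$, and then establish two things: that this $W$ is genuinely an \emph{orthogonal} direct sum, and that $W^\perp = \kernel(\Delta_k)$. The only topological input needed is \cref{lem:fun-thm-alg-top}; everything else is linear algebra carried out in the canonical orthonormal basis of oriented simplices, for which the adjoint $\partial_k^\top$ behaves exactly like a matrix transpose.

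First I would establish that the two images are mutually orthogonal. For arbitrary chains $x \in \cC_{k+1}$ and $y \in \cC_{k-1}$, the adjoint relation gives $\langle \partial_{k+1} x, \partial_k^\top y \rangle = \langle \partial_k \partial_{k+1} x, y \rangle$. By \cref{lem:fun-thm-alg-top} applied with index $k+1$, we have $\partial_k \circ \partial_{k+1} = 0$, so this inner product vanishes, proving $\image(\partial_{k+1}) \perp \image(\partial_k^\top)$. Hence $W$ is a bona fide orthogonal direct sum of its two summands.

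Next I would characterize the kernel of the Hodge Laplacian. The key observation is that $\Delta_k$ is a sum of two positive-semidefinite operators: for any $x \in \cC_k$,
\begin{equation}
  \langle \Delta_k x, x \rangle = \| \partial_k x \|^2 + \| \partial_{k+1}^\top x \|^2 .
\end{equation}
Therefore $\Delta_k x = 0$ forces both summands to vanish, giving $\kernel(\Delta_k) = \kernel(\partial_k) \cap \kernel(\partial_{k+1}^\top)$, the reverse inclusion being immediate from the definition of $\Delta_k$. I would then invoke the fundamental identity $\image(A)^\perp = \kernel(A^\top)$, valid for any linear map written as a matrix in our orthonormal bases. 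Applying it to $A = \partial_k^\top$ and to $A = \partial_{k+1}$ yields $\kernel(\partial_k) = \image(\partial_k^\top)^\perp$ and $\kernel(\partial_{k+1}^\top) = \image(\partial_{k+1})^\perp$, so that
\begin{equation}
  \kernel(\Delta_k) = \image(\partial_k^\top)^\perp \cap \image(\partial_{k+1})^\perp = W^\perp .
\end{equation}

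Finally, the standard decomposition $\cC_k = W \oplus W^\perp$ closes the argument, producing the claimed three-way orthogonal splitting. I do not expect a serious obstacle, as the statement is classical; the step demanding the most care is the kernel characterization, since it hinges on recognizing $\Delta_k$ as a sum of positive-semidefinite pieces so that its kernel \emph{equals} (rather than merely contains) the intersection $\kernel(\partial_k) \cap \kernel(\partial_{k+1}^\top)$. A minor but essential point to verify is that the adjoint used throughout is the ordinary transpose, which is precisely what the canonical orthonormal basis of oriented simplices guarantees.
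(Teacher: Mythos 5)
Your proof is correct, and it is the standard argument for the combinatorial Hodge decomposition: orthogonality of the two images via $\partial_k\circ\partial_{k+1}=0$, the identification $\kernel(\Delta_k)=\kernel(\partial_k)\cap\kernel(\partial_{k+1}^\top)$ via positive-semidefiniteness of the quadratic form, and $\image(A)^\perp=\kernel(A^\top)$ to close the argument. Note that the paper itself offers no proof of \cref{thm:hodge} at all; it is presented as classical background (with the supplementary material giving only an interpretive discussion of the $k=1$ case in terms of curly, gradient, and harmonic flows), so there is no in-paper argument to compare against --- your write-up supplies exactly the proof the paper implicitly defers to the literature. The only cosmetic point worth adding is the convention handling the boundary cases $k=0$ and $k=K$, where $\partial_0$ and $\partial_{K+1}$ are taken to be zero maps so that the corresponding image summands degenerate to $\{0\}$ and the argument goes through unchanged.
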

This result is particularly pleasing, as it gives us a convenient representation of a $k$-chain in terms of the ``upper'' and ``lower'' incidence structures of the simplices on which it is supported, as defined by its boundary operators.
Moreover, we have that $\dim\kernel(\Delta_k)=\beta_k$, where $\beta_k$ is the $k^\mathrm{th}$ \emph{Betti number}, which counts the number of ``$k$-dimensional holes'' in $\cX$~\citep{Carlsson:2009}.

Of particular interest in this work is the Hodge Decomposition of the space $\cC_1$, which models ``flows'' on simplicial complexes, as studied in detail by \citet{Schaub:2020}.
Indeed, $1$-chains on a simplicial complex are a natural way to discretize a continuous vector field~\citep{Barbarossa:2020b}, or to model the flow of traffic in a road network~\citep{Jia:2019,Roddenberry:2019}.
That is, we consider the decomposition $\cC_1=\image(\partial_2)\oplus\image(\partial_1^\top)\oplus\kernel(\Delta_1)$.

First, we note that $\image(\partial_2)$ corresponds to $1$-chains that are \emph{curly} with respect to the triangles in a simplicial complex: 
that is, such $1$-chains consist of flows around the boundary of triangles.

Next, we observe that $\image(\partial_1^\top)$ corresponds to $1$-chains induced by node \emph{gradients}.
Precisely, $1$-chains in the image of $\partial_1^\top$ are determined by a set of scalar values on the nodes, whose local differences dictate the coefficients of the $1$-chain, analogously to vector fields in Euclidean space induced by the gradient of a scalar field.

Finally, the subspace of $\cC_1$ determined by $\kernel(\Delta_1)$ consists of $1$-chains that are neither curly nor gradient: we call such chains \emph{harmonic}.
Harmonic chains are $1$-chains with the property that the sum of the coefficients around any triangle is zero, while the sum of the flow coefficients incident to any node is also zero.
This subspace is of particular interest, since it captures a natural notion of a smooth, conservative flow, as leveraged by \citet{Ghosh:2018,Jia:2019,Schaub:2018,Schaub:2020,Barbarossa:2018,Barbarossa:2020a}.
We refer the reader to these works for more in-depth discussion of modeling flows with $1$-chains and the relevant applications of discrete Hodge theory, as well as to the supplementary material for a more in-depth discussion of \cref{thm:hodge}.

\section{Admissible Neural Architectures}\label{sec:admissible}

We define three desirable properties of a graph neural network acting on chains supported by a simplicial complex.
These properties will be later leveraged to construct our proposed architecture for the task of trajectory prediction.
Throughout, let $\SCN_{\bW,\partial}:\cC_j\to\cC_\ell$ denote a neural network architecture acting on input data $\bc_j\in\cC_j$ and whose output is a chain of possibly different order, where the neural network is parameterized by a collection of weights $\bW$ and boundary operators $\partial$.

\subsection{Permutation Equivariance}
%
When representing graphs and related structures with matrices, a key property is \emph{permutation equivariance}.
For instance, if we take a graph with adjacency matrix $\bA$, then multiplying $\bA$ from both sides by a permutation matrix, \ie{} $\bP\bA\bP^\top$, corresponds to relabeling the nodes in the original graph. 
In this way, $\bA$ and $\bP\bA\bP^\top$ are alternative matrix representations of the \emph{same} graph.
Therefore, in order to ensure that an operation does not depend on the specific (arbitrary) node labeling, this operation must be impervious to the application of a permutation matrix.
More formally, and for simplicial complexes in general, we define permutation equivariance as follows.
\begin{property}[Permutation Equivariance]\label{property:perm-equi}
  Let $\cX$ be a simplicial complex with boundary maps $\partial=\{\partial_k\}_{k=1}^K$.
  Let $\cP=\{\bP_k\}_{k=0}^K$ be a collection of permutation matrices matching the dimensions of $\{\cC_k\}_{k=0}^K$, \ie{} $\bP_k\in\Rnn{|\cX_k|}$, and define $[\cP\partial]_k:=\bP_{k-1}\partial_k\bP_k^\top$.
  We say that $\SCN$ satisfies \emph{permutation equivariance} if for any such $\cP$, we have that
  \begin{equation}\label{eq:permutation_equivariance}
    \SCN_{\bW,\partial}(\bc_j)=\bP_\ell\SCN_{\bW,\cP\partial}(\bP_j\bc_j).
  \end{equation}
\end{property}

The above expression guarantees that if we relabel the simplicial complex and apply a neural network, the output is a relabeled version of the output that we would have obtained by applying the neural network prior to relabeling.

\subsection{Orientation Equivariance}
%
In defining the boundary operators $\partial$, we choose an orientation for each simplex in $\cX$.
The choice of this boundary is arbitrary, and only serves to meaningfully represent boundary operations and useful signals on simplicial complexes.
Similar to the arbitrary choice of ordering in the matrix representation motivating permutation equivariance, we also require an architecture to be insensitive to the chosen orientations.
Recalling that reversing the orientation of a $k$-chain is equivalent to multiplying it by $-1$, we define orientation equivariance as follows.

\begin{property}[Orientation Equivariance]\label{property:orient-equi}
  Let $\cX$ be a simplicial complex with boundary maps $\partial=\{\partial_k\}_{k=1}^K$.
  Let $\cD=\{\bD_k\}_{k=0}^K$ be a collection of diagonal matrices with values taking $\pm 1$ with the condition that $\bD_0=\bI$, and matching the dimensions of $\{\cC_k\}_{k=0}^K$, \ie{} $\bD_k\in\Rnn{|\cX_k|}$, and define $[\cD\partial]_k:=\bD_{k-1}\partial_k\bD_k$.
  We say that $\SCN$ satisfies \emph{orientation equivariance} if for any given $\cX,\cD,\bW,\bc_j$, we have that
  \begin{equation}\label{eq:orientation_equivariance}
    \SCN_{\bW,\partial}(\bc_j)=\bD_\ell\SCN_{\bW,\cD\partial}(\bD_j\bc_j).
  \end{equation}
\end{property}

The intuition in~\eqref{eq:orientation_equivariance} is analogous to that in~\eqref{eq:permutation_equivariance}, but focuses on changes in orientation as opposed to relabeling. Also, notice that orientation is only defined for simplices of order at least $1$, \ie{}, edges and higher.
This is due to the simple fact that the nodes of a simplicial complex naturally do not have an orientation: there is only one permutation of a singleton set.
For this reason, we require $\bD_0=\bI$ in~\cref{property:orient-equi}.

\subsection{Simplicial Awareness}
%
The notions of permutation and orientation equivariance are fairly intuitive, corresponding to very common constructs in the analysis of graph-structured data, as well as graph neural networks in particular.
However, the higher-order structures in simplicial complexes motivate architectures for data supported on simplices of different order, and regularized by hierarchically organized structure.
To this end, we define the notion of \emph{simplicial awareness}, which enforces dependence of an architecture's output on all of the boundary operators.

\begin{property}[Simplicial Awareness]\label{property:simp-aware}
  Let $\SCN_{\bW,\partial}:\cC_j\to\cC_\ell$ and select some integer $k>0$ such that $k \neq j$ and $k \neq \ell$.
  Suppose there exists simplicial complexes $\cX$ and $\cX'$ such that
  $\cX_0=\cX'_0=\cV,
  \cX_j=\cX'_j,
  \cX_\ell=\cX'_\ell,
  \cX_k\neq\cX'_k$.
  Denote the respective boundary operators of $\cX$ and $\cX'$ by $\partial$ and $\partial'$.
  If there exists $\bc_j\in\cC_j$ and weight parameters $\bW$ where
  \begin{equation}
    \SCN_{\bW,\partial}(\bc_j)\neq\SCN_{\bW,\partial'}(\bc_j),
  \end{equation}
  we say that $\SCN$ satisfies \emph{simplicial awareness of order~$k$}.
  Moreover, for the set of simplicial complexes of dimension at most $K$, if the above is satisfied for all $k\leq K$, then we simply say that $\SCN$ satisfies \emph{simplicial awareness}.
\end{property}

Put simply, simplicial awareness of order $k$ indicates that an architecture is not independent of the $k$-simplices in the underlying simplicial complex.
For example, consider a simplicial complex $\cX$ composed of nodes, edges, and triangles, and $\SCN_{\bW,\partial}:\cC_0 \to \cC_0$.
One can envision $\SCN_{\bW,\partial}$ in the form of a standard graph neural network by ignoring the triangles, but this would violate simplicial awareness of order $2$.

\subsection{Admissibility}
%
We define a notion of \emph{admissibility} that we use to guide our design of neural networks acting on chain complexes.
\begin{defn}\label{def:admissibility}
  An architecture is \emph{admissible} if it satisfies \hyperref[property:perm-equi]{permutation equivariance}, \hyperref[property:orient-equi]{orientation equivariance}, and \hyperref[property:simp-aware]{simplicial awareness}.
\end{defn}
We define admissibility largely as a suggestion: of course, neural network architectures need not satisfy these three properties.
However, much like the permutation equivariance of graph convolutional networks, enforcing the corresponding symmetries in a simplicial neural network ensures that the design is not subject to the user's choice of permutation.
Enforcing the property of permutation equivariance enables graph neural networks trained on small graphs to generalize well to larger graphs, since it is not dependent on a set of hand-selected node labels~\citep{Hamilton:2020}.
Indeed, by enforcing meaningful symmetries under group actions in a domain, neural architectures learn more efficient, generalizable representations~\citep{Cohen:2016}.

For the purposes of a simplicial complex, the same logic applies: the neural architecture itself should reflect the symmetries and invariances of the underlying domain, in order to promote generalization to unseen structures in a way that is not subject to design by the user.
The motivation for this in the setting of simplicial complexes is highlighted by the property of orientation equivariance: since the orientation of simplices is arbitrary, we do not want to train an architecture that is dependent on the chosen orientation of the training data, since there is little hope of it working well on unseen structures without careful user-selected orientations.

Finally, simplicial awareness enforces a minimum representational capacity: if a neural architecture is incapable of incorporating information from certain structural features of the simplicial complex, one can construct vastly different datasets with the exact same output, \eg{}, a simplicial complex that is triangle-dense, compared to its $1$-skeleton.

\section{Trajectory Prediction with SCoNe}\label{sec:scone}

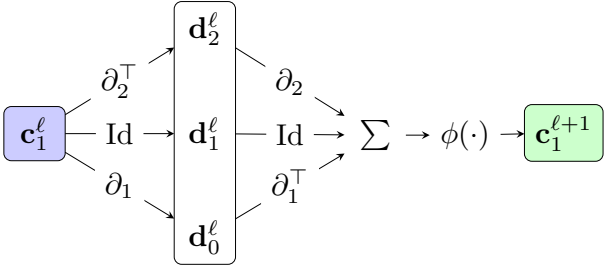
\begin{figure}
  \centering
  \resizebox{\columnwidth}{!}{\begin{tikzpicture}[every node/.style={scale=1.2},rounded corners=3pt]
  \matrix (m) [matrix of math nodes,row sep=2em,column sep=1em,minimum width=2em]
  {
    {} & {} & \mathbf{d}_2^\ell & {} & {} & {} & {}  \\
    \mathbf{c}_1^\ell &
    {} &
    \mathbf{d}_1^\ell &
    {} &
    \mathbf{\sum} &
    \mathbf{\phi(\cdot)} &
    \mathbf{c}_1^{\ell+1} \\
    {} & {} & \mathbf{d}_0^\ell & {} & {} & {} & {} \\
  };
  
  \path[-stealth]
  (m-2-1)
  edge node [fill=white] {$\partial_2^\top$} (m-1-3)
  edge node [fill=white] {$\mathrm{Id}$} (m-2-3)
  edge node [fill=white] {$\partial_1$} (m-3-3)

  (m-1-3) edge node [fill=white] {$\partial_2$} (m-2-5)
  (m-2-3) edge node [fill=white] {$\mathrm{Id}$} (m-2-5)
  (m-3-3) edge node [fill=white] {$\partial_1^\top$} (m-2-5)
  
  (m-2-5) edge (m-2-6)
  (m-2-6) edge (m-2-7);

  \begin{scope}[on background layer]
    \filldraw[fill=blue,fill opacity=0.2]
    (m-2-1.south west) rectangle (m-2-1.north east);
    
    \filldraw[fill=green,fill opacity=0.2]
    (m-2-7.south west) rectangle (m-2-7.north east);
    
    \draw (m-3-3.south west) rectangle (m-1-3.north east);
  \end{scope}
    
\end{tikzpicture}

  \caption{
    A single layer of $\SCONE$, with additional intermediate representations $\bd_0^\ell\in\cC_0, \bd_1^\ell\in\cC_1, \bd_2^\ell\in\cC_2$ included, to emphasize the structure of each $\SCONE$ layer as computing over all levels of the simplicial complex.
  }
  \label{fig:scone}
\end{figure}

We consider the task of \emph{trajectory prediction}~\citep{Benson:2016b,Wu:2017,Cordonnier:2019} for agents traveling over a simplicial complex.
A \emph{trajectory} over a simplicial complex $\cX$ with nodes $\cV$ is a sequence $[i_0,i_1,\ldots,i_{m-1}]$ of elements of $\cV$, such that $i_j$ is adjacent to $i_{j+1}$ for all $0\leq j< m-1$.
As pointed out by \citet{Ghosh:2018} and \citet{Schaub:2020}, such trajectories are naturally modeled through the lens of the Hodge Laplacian.
In particular, a trajectory viewed as an oriented $1$-chain (a linear combination of the edges of a simplicial complex) is often harmonic, \ie{}, conservative and curl-free.
This aligns with intuition: a natural walk in space will typically not backtrack on itself nor loop around points locally, and must exit most points it enters.

The trajectory prediction task considers as input a trajectory $[i_0,i_1,\ldots,i_{m-1}]$ and asks what node $i_m$ will be.
For simplicity, we do not consider the setting where a trajectory terminates, \ie{} $i_m=i_{m-1}$.

To this end, we present a neural network architecture called $\SCONE$ (Simplicial Complex Net) for trajectory prediction on simplicial complexes of dimension $2$.
We specify $\SCONE$ as a map from $\cC_1$ to $\cN(i_{m-1})$ in \cref{alg:scone}, inspired by the structure of graph convolutional networks~\citep{Bruna:2014,Defferrard:2016,Kipf:2017}.

\begin{algorithm}[tb]
  \caption{$\SCONE$ for Trajectory Prediction}
  \label{alg:scone}
  \begin{algorithmic}[1]
    \STATE \textbf{Input:} partial trajectory $[i_0,i_1,\ldots,i_{m-1}]$
    \STATE \textbf{Parameters:} \\
    boundary operators $\{\partial_k\}_{k=0}^2$ \\
    number of layers $L$ \\
    hidden dimensions $\{F_\ell\}_{\ell=0}^{L+1}, F_0=F_{L+1}=1$ \\
    weight matrices $\{\{\bW_k^\ell\in\Rnp{F_\ell}{F_{\ell+1}}\}_{\ell=0}^{L}\}_{k=0}^2$ \\
    activation function $\phi$
    \STATE \textbf{Initialize:} $\bc_1^0\in\cC_1, \bc_1^0=0$.
    \FOR{$j=0$ \textbf{to} $m-2$}
    \STATE $\bc_1^0\gets \bc_1^0+[i_j,i_{j+1}]$
    \ENDFOR
    \FOR{$\ell=0$ \textbf{to} $L-1$}
    \STATE
    \vspace{2mm}
        $\bc_1^{\ell+1} \gets \phi(\partial_2\partial_2^\top\bc_1^\ell\bW_2^\ell 
        + \bc_1^\ell\bW_1^\ell
        + \partial_1^\top\partial_1\bc_1^\ell\bW_0^\ell)$
    \vspace{2mm}
    \ENDFOR
    \STATE $\bc_0^{L+1}\gets \partial_1\bc_1^L\bW_0^L$
    \STATE $\bz\gets\mathrm{softmax}(\{[\bc_0^{L+1}]_j:j\in\cN(i_{m-1})\})$
    \STATE \textbf{Return:} $\widehat{i}_m\gets\argmax_j z_j$
  \end{algorithmic}
\end{algorithm}

\subsection{Representation of Trajectories as 1-Chains}

In order to leverage the properties of boundary operators and Hodge Laplacians of simplicial complexes, the input to $\SCONE$ needs to be a $1$-chain.
In particular, we lift the sequence of nodes $[i_0,i_1,\ldots,i_{m-1}]$ to a sequence of oriented edges $[[i_0,i_1],[i_1,i_2],\ldots,[i_{m-2},i_{m-1}]]$.
Then, we ``collapse'' the sequential structure by summing each edge in the sequence, thus yielding a $1$-chain, since each oriented edge is itself a $1$-chain in the vector space $\cC_1$.
Due to trajectories consisting of sequences of \emph{adjacent} nodes, the sequential information is mostly captured by this representation in $\cC_1$.
%

\subsection{An Admissible Architecture for 1-Chains}

Given the representation of a trajectory as a $1$-chain, we now aim to predict the next step in the trajectory prediction task.
This consists of a map from $\cC_1\to\cC_1$, followed by a mapping to $\cC_0$, and then a decision step dependent on the neighborhood of the node $i_{m-1}$.

We begin by decomposing each layer of $\SCONE$ into two steps.
First, we compute $\bc_1^{\ell+1}$ from $\bc_1^{\ell}$ as
\begin{equation}
  \bc_1^{\ell+1} \gets \phi(\partial_2\partial_2^\top\bc_1^\ell\bW_2^\ell + \bc_1^\ell\bW_1^\ell + \partial_1^\top\partial_1\bc_1^\ell\bW_0^\ell),
\end{equation}
where $\phi$ is an activation function, typically applied ``elementwise'' in the chosen oriented basis for $\cC_1$: we visualize this computation in \cref{fig:scone}.
We have abused notation here to allow each intermediate representation $\bc_1^\ell$ to consist of multiple $1$-chains, which are mixed via linear operations from the right via the matrices $\bW_k^\ell$.
After $L$ such layers, we apply the boundary map $\partial_1$, yielding a $0$-chain $\bc_0^{L+1}=\partial_1\bc_1^L\bW_0^L$.
Then, a distribution over the candidate nodes is computed via the softmax operator applied to the restriction of $\bc_0^{L+1}$ to the nodes in the neighborhood of the terminal node $\cN(i_{m-1})$.
Using sparse matrix-vector multiplication routines, the $\ell^\mathrm{th}$ layer of $\SCONE$ can be evaluated using $\bigO(|\cX_1|F_\ell F_{\ell+1}+|\cX_2|\min\{F_\ell,F_{\ell+1}\})$ operations, so that the entire architecture has a runtime of $\bigO(\sum_{\ell=0}^{L-1}(|\cX_1|F_\ell F_{\ell+1} + |\cX_2|\min\{F_\ell,F_{\ell+1}\}))$.%
  \footnote{Depending on the density of edges and triangles in $\cX$, this can be improved in practice. We leave details to the supplementary materials.}
  Moreover, the architecture of $\SCONE$ is localized, in the sense that it computes information based only on an $L$-hop (simplicial) neighborhood of the terminal node, making this architecture able to work on large simplicial complexes by only operating on a localized region of interest.

  Having defined $\SCONE$, we establish conditions under which the portion of this architecture that maps $\cC_1\to\cC_0$ is admissible.%
\footnote{
Since the final output is not a $0$-chain, but an element of $\cX_0$, admissibility of this map implies permutation and orientation \emph{invariance} for the entire architecture, rather than equivariance.
}

\begin{prop}\label{prop:admissibility}
  Assume that the activation function $\phi$ is continuous and applied elementwise.
  $\SCONE$ (\cref{alg:scone}) is admissible only if $\phi$ is an odd, nonlinear function.
\end{prop}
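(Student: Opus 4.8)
The plan is to prove the two necessary conditions separately, each by contraposition: a non-odd $\phi$ breaks orientation equivariance, and a linear $\phi$ breaks simplicial awareness of order $2$. Permutation equivariance plays no role, since it holds for any elementwise $\phi$ and so imposes no constraint on it. For oddness, first I would track how a reorientation $\cD$ acts on the pre-activation of a single layer. Writing $L_u:=\partial_2\partial_2^\top$ and $L_d:=\partial_1^\top\partial_1$ and using $\bD_0=\bI$ and $\bD_1^2=\bD_2^2=\bI$, a direct computation gives $[\cD\partial]_2[\cD\partial]_2^\top=\bD_1 L_u\bD_1$ and $[\cD\partial]_1^\top[\cD\partial]_1=\bD_1 L_d\bD_1$, while the input becomes $\bD_1\bc_1^\ell$. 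Substituting into the layer and cancelling $\bD_1\bD_1=\bI$, the pre-activation $\bY^\ell:=L_u\bc_1^\ell\bW_2^\ell+\bc_1^\ell\bW_1^\ell+L_d\bc_1^\ell\bW_0^\ell$ transforms as $\bY^\ell\mapsto\bD_1\bY^\ell$. If $\phi$ is odd then $\phi(\bD_1\bY)=\bD_1\phi(\bY)$, and since the terminal map applies $\partial_1$ with $(\partial_1\bD_1)(\bD_1\,\cdot)=\partial_1(\cdot)$, the output $0$-chain is invariant. For necessity, suppose $\phi(-t_0)\neq-\phi(t_0)$ for some $t_0$; take a one-layer instance with $\bW_2^0=\bW_0^0=0$, $\bW_1^0=1$, and input equal to $t_0$ times the indicator $1$-chain of a single edge $e_0$, so that $\bY$ equals $t_0$ on $e_0$ and $0$ elsewhere. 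Choosing $\bD_1$ to flip only the $e_0$ coordinate, the two post-activations differ at $e_0$ by $\phi(t_0)+\phi(-t_0)\neq 0$; applying $\partial_1$ with a nonzero scalar $\bW_0^L$ propagates this difference to the two endpoints of $e_0$, so the output $0$-chains differ and orientation equivariance fails. Hence admissibility forces $\phi$ odd.

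For nonlinearity, assume $\phi$ is odd (already forced) and, toward a contradiction, linear, so $\phi(x)=ax$. The whole map is then a composition of linear maps; expanding the $L$ layers, $\bc_1^L$ is a sum of terms of the form $a^L\,(\text{word in }L_u,\bI,L_d)\,\bc_1^0\,(\text{product of weight matrices})$, since left multiplication by $L_u,L_d$ commutes with right multiplication by the $\bW_k^\ell$. The key algebraic input is \cref{lem:fun-thm-alg-top}: from $\partial_1\partial_2=0$ we get $L_uL_d=L_dL_u=0$ and $\partial_1 L_u=0$. Consequently every word containing both $L_u$ and $L_d$ vanishes, because at the first change of letter it contains an adjacent $L_uL_d$ or $L_dL_u$; and after post-composing with $\partial_1$, every surviving pure-$L_u$ word is annihilated by $\partial_1 L_u=0$. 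Thus $\bc_0^{L+1}=\partial_1\bc_1^L\bW_0^L$ reduces to $\sum_q \partial_1(\partial_1^\top\partial_1)^q\bc_1^0\,\bW^{(q)}$, a quantity determined entirely by $\partial_1$ and independent of $\partial_2$.

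Since the complex has dimension $2$ and the input/output orders are $j=1$, $\ell=0$, the only order at which simplicial awareness can be required is $k=2$. Independence of the output from $\partial_2$ means that for any two complexes $\cX,\cX'$ sharing $\cX_0$ and $\cX_1$ but differing in their triangles, $\SCONE_{\bW,\partial}(\bc_1)=\SCONE_{\bW,\partial'}(\bc_1)$ for all $\bc_1$ and $\bW$; no witness exists, so simplicial awareness of order $2$ fails, contradicting admissibility, and $\phi$ must be nonlinear. I expect the \emph{nonlinearity} step to be the main obstacle: it hinges on the vanishing identities $L_uL_d=L_dL_u=0$ and $\partial_1 L_u=0$ and on arguing carefully that mixed Laplacian words vanish even with matrix-valued features, for which the commutation of left- and right-multiplications is essential; the oddness step, by contrast, reduces cleanly to a single-coordinate construction.
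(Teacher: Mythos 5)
Your overall decomposition is the same as the paper's: permutation equivariance holds for any elementwise $\phi$ and so imposes no constraint, oddness is forced by orientation equivariance, and nonlinearity is forced by simplicial awareness of order $2$. Your nonlinearity argument is correct for every depth $L$ and is essentially the paper's; it is in fact slightly more direct, since you show at the operator level that every word containing $L_u=\partial_2\partial_2^\top$ is annihilated (via $L_uL_d=L_dL_u=0$ and $\partial_1 L_u=0$), so the readout is a polynomial in $\partial_1$ alone, whereas the paper first passes the input through the Hodge decomposition (\cref{thm:hodge}) before reaching the same conclusion.

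The genuine gap is in the oddness step: your counterexample is a \emph{one-layer} network, but the proposition concerns $\SCONE$ with an arbitrary number of layers, and the paper's proof explicitly establishes that for \emph{every} $L$ there exist weights violating \cref{property:orient-equi}. The depth-one case does not imply this, because after the first activation the two orientation branches are no longer related by $\bD_1$, and deeper layers can restore equivariance of the output for particular weight patterns. Concretely, take $\phi(t)=1-t$, which is continuous and not odd: your construction detects a violation at depth one, but its naive depth-two extension (again using only the identity-term weights $\bW_1^\ell=1$ and a final readout) computes $\partial_1\,\phi(\phi(\cdot))$, and $\phi\circ\phi=\mathrm{Id}$ is odd, so that weight choice satisfies the equivariance identity and certifies nothing; one must instead exhibit different weights (e.g., mixing in the $\partial_1^\top\partial_1$ term). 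Ruling this out for all depths and all continuous non-odd $\phi$ is where the paper's proof does its real work: an induction over layers maintaining $\bc_{1+}^{\ell}\neq-\bc_{1-}^{\ell}$, split into the case where the two branches' chains are equal (handled by rescaling with the next weight) and the case where they differ (handled by showing that if no next-layer weight could break equivariance, then $\phi(\gamma a)=-\phi(\gamma b)$ for all $\gamma$ with $a\neq\pm b$, which by \cref{prop:self-similar-continuity} contradicts continuity of $\phi$). Unless you read the proposition as quantifying existentially over $L$ as part of the architecture—which is not how the paper proves or uses it, since its experiments concern a three-layer network—your proposal needs this induction (or an equivalent argument) to cover $L\geq 2$.
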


\begin{proof}[Proof (Sketch)]
  The proof of permutation equivariance for elementwise activation functions is directly analogous to the proof for graph neural networks, so we leave the details to the supplementary materials.

For continuous elementwise activation functions, we now consider the conditions for $\SCONE$ to satisfy orientation equivariance.
Since changes in orientation for a basis of $\cC_1$ can be expressed as a \emph{sequence} of orientation changes for individual edges, it is sufficient to study such ``single-edge'' transformations.
For some $1\leq j\leq m$, let $e_j'=-e_j$ be the reversal of the oriented edge $e_j$.
Orientation equivariance for elementwise activation functions can then be written as
\begin{equation}\label{eq:activate-orientate}
  \phi(\langle\bc_1,e_j\rangle) = \phi(-\langle\bc_1,e_j'\rangle) = -\phi(\langle\bc_1,e_j'\rangle).
\end{equation}
This condition holds for all inputs if and only if $\phi$ is an odd function.
Under these conditions, $\SCONE$ is the composition of orientation equivariant functions, and is thus orientation equivariant itself.

Finally, we consider simplicial awareness of order $2$.
Suppose $\phi$ is an odd, linear function: it is sufficient to assume that $\phi$ is the identity map.
Let a $1$-chain $\bc_1^0\in\cC_1$ be given arbitrarily.
By \cref{thm:hodge}, there exists $\bw\in\cC_0, \bx\in\kernel(\Delta_1)\subseteq\cC_1, \by\in\cC_2$ such that $\bc_1^0=\partial_1^\top\bw+\bx+\partial_2\by$.
Some simple algebra, coupled with \cref{lem:fun-thm-alg-top}, shows that when $\phi$ is the identity map, the $0$-chain at the output of $\SCONE$ is given by
\begin{equation}
    \bc_0^{L+1}=\partial_1\left(\mathrm{Id}+\partial_1\partial_1^\top\right)^L\bw.
\end{equation}
That is, the output of $\SCONE$ does not depend on $\partial_2$, and thus fails to fulfill simplicial awareness of order $2$.
However, if $\phi$ is nonlinear, \cref{lem:fun-thm-alg-top} does not come in to effect, since $\partial_1\circ\phi\circ\partial_2\neq 0$, allowing for simplicial awareness.
\end{proof}

A detailed proof can be found in the supplementary materials.
\Cref{prop:admissibility} reveals the required properties of $\phi$ for $\SCONE$ to be admissible.
In particular, we propose the use of the hyperbolic tangent activation function $\tanh$, applied to each coefficient (in the standard basis of oriented $1$-simplices) of the intermediate chains $\bc_1^\ell$; see~\cref{fig:scone}.
The fact that nonlinearities are necessary to incorporate higher-order information is in line with results in~\citet{Neuhauser:2020a,Neuhauser:2020b}, where it is shown that understanding consensus dynamics on higher-order networks must consider nonlinear behavior, lest the system be equivalently modeled as a linear process on a rescaled pairwise network.
Existing works in the development of simplicial neural networks have not discussed the necessity of having odd and nonlinear activation functions in convolutional architectures.
In particular, \citet{Ebli:2020,Bunch:2020} propose similar convolutional architectures and use $\ReLU$ activation functions.
We discuss the conditions under which these architectures can be made admissible in the supplementary material.

\section{Experiments}
\label{sec:experiments}

\begin{figure}[tb!]
  \centering
  \subfigure[][]{
    \centering
    \label{fig:exp-graphs-a}
    \resizebox{0.4\linewidth}{!}{\begin{tikzpicture}[>=Stealth]
  \loaddata{./figs/synthetic-image-data/coords.dat}
  \foreach \i/\x/\y in \loadeddata{\coordinate (c\i) at (10*\x,9*\y);}

  \loaddata{./figs/synthetic-image-data/faces.dat}
  \foreach \i/\j/\k in \loadeddata{\filldraw[ultra thin, fill=blue!90!gray, fill opacity=0.1] (c\i) -- (c\j) -- (c\k) -- cycle;}

  \foreach \pathnum in {0,...,2} {
    \loaddata{./figs/synthetic-image-data/path-\pathnum.dat}
    \foreach \i/\j/\k in \loadeddata {
      \draw [ultra thick, red, ->] (c\i) -- (c\j);
    }
  }
\end{tikzpicture}}
  }
  \subfigure[][]{%
    \centering
    \label{fig:exp-graphs-b}%
    \resizebox{0.4\linewidth}{!}{\begin{tikzpicture}[>=Stealth]
  \loaddata{./figs/drifters-image-data/coords.dat}
  \foreach \i/\x/\y in \loadeddata{\coordinate (c\i) at (30*\x,24*\y);}

  \loaddata{./figs/drifters-image-data/faces.dat}
  \foreach \i/\j/\k in \loadeddata{\filldraw[ultra thin, fill=blue!90!gray, fill opacity=0.1] (c\i) -- (c\j) -- (c\k) -- cycle;}

  \foreach \pathnum in {0,...,2} {
    \loaddata{./figs/drifters-image-data/path-\pathnum.dat}
    \foreach \i/\j/\k in \loadeddata {
      \draw [ultra thick, red, ->] (c\i) -- (c\j);
    }
  }
\end{tikzpicture}}
  }
  
  \subfigure[][]{%
    \centering
    \label{fig:exp-graphs-c}%
    \includegraphics[width=0.4\linewidth]{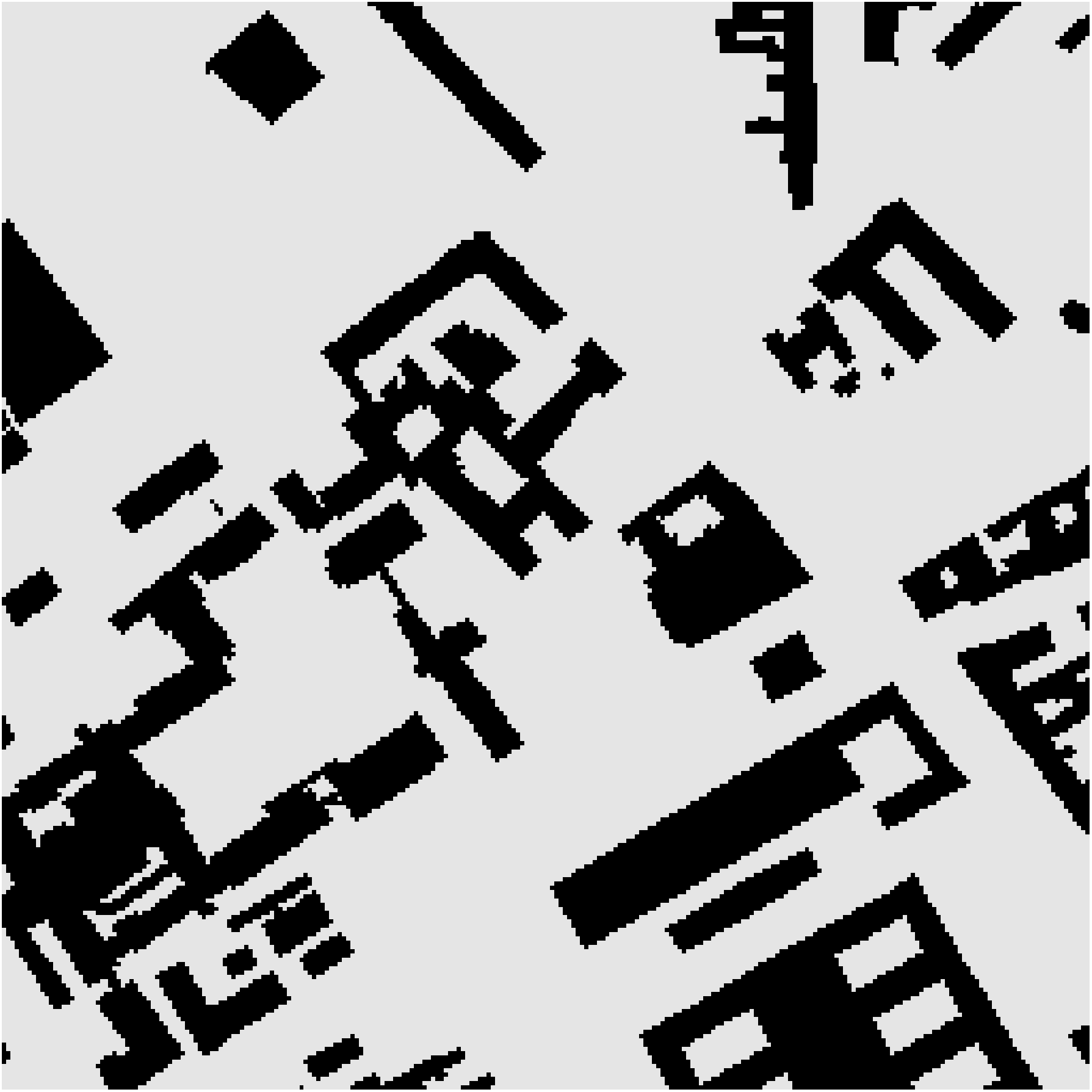}
  }
  \caption{
    Complexes used for evaluating $\SCONE$, with sample trajectories from each dataset.
    \subref{fig:exp-graphs-a}
    Synthetic example, where 400 random points in the unit square have been triangulated, followed by the removal of points from two regions.
    This yields a simplicial complex where $\dim\kernel(\Delta_1)=2$.
    \subref{fig:exp-graphs-b}
    Ocean drifters example, where the large hole (Madagascar) yields $\dim\kernel(\Delta_1)=1$.
    \subref{fig:exp-graphs-c}
    Berlin map example, where each darkly shaded region is an obstacle, yielding a hole in the underlying cubical complex.
    Figure from~\citep{Sturtevant:2012}.
  }
  \label{fig:exp-graphs}
\end{figure}

\begin{table*}[tb!]
  \caption{
    Test accuracies for trajectory prediction task.
    \subref{tab:exp-synth}
    Synthetic dataset with randomly oriented edges, comparing Markov chain, RNN, harmonic projection methods, $\SCONE$, $\mathrm{SCNN}$~\citep{Ebli:2020}, and $\mathrm{S2CCNN}$~\citep{Bunch:2020}.
    \subref{tab:exp-synth-orient}
    Synthetic dataset with manually oriented edges, compared across different activation functions for $\SCONE$.
    \subref{tab:exp-drift}
    Test accuracies for Ocean Drifters and Berlin trajectory datasets.
  }
  \label{tab:exp}
  \vskip 0.15in
  \centering
  \resizebox{\linewidth}{!}{
  \subtable[]{
    \centering
    \label{tab:exp-synth}
    \begin{small}
      \begin{sc}
        \begin{tabular}{lccccccccccc}
          \toprule
          & Markov & RNN & $\ker(\Delta_1)$ & $\ker(\partial_1)$ & $\SCONE$ & $\SCONE$ & $\SCONE$ & $\SCONE$ & $\SCONE$ & $\mathrm{SCNN}$ & $\mathrm{S2CCNN}$ \\
          &  &  & & & $\tanh$ & $\tanh$, no tri. & $\ReLU$ & $\mathrm{sigm.}$ & $\mathrm{Id}$ & & \\
          \midrule
          Std.   & 0.70 & \textbf{0.73} & 0.55 & 0.32 & 0.69 & 0.55 & 0.64 & 0.59 & 0.31 & 0.64 & 0.62 \\
          Rev.   & 0.24 & 0.01 & 0.58 & 0.21 & \textbf{0.59} & 0.49 & 0.57 & 0.57 & 0.33 & 0.48 & 0.47 \\
          Tra. & -- & -- & 0.58 & 0.40 & \textbf{0.61} & 0.58 & 0.56 & 0.53 & 0.44 & 0.42 & 0.57 \\
          \bottomrule
        \end{tabular}
      \end{sc}
    \end{small}
  }
  }
  \subtable[]{
    \centering
    \label{tab:exp-synth-orient}
    \begin{small}
      \begin{sc}
        \begin{tabular}{lcccc}
          \toprule
          & $\tanh$ & $\ReLU$ & $\mathrm{sigm.}$ & $\mathrm{Id.}$ \\
          \midrule
          Std. & 0.65 & 0.65 & \textbf{0.66} & 0.27 \\
          Rev. & \textbf{0.63} & 0.24 & 0.10 & 0.31 \\
          \bottomrule
        \end{tabular}
      \end{sc}
    \end{small}
  }
  \subtable[]{
    \centering
    \label{tab:exp-drift}
    \begin{small}
      \begin{sc}
        \begin{tabular}{lcccccc}
          \toprule
          & Markov & RNN & $\ker(\Delta_1)$ & $\SCONE$ & $\mathrm{SCNN}$ & $\mathrm{S2CCNN}$ \\
          \midrule
          Ocean & 0.45 & 0.44 & 0.45 & \textbf{0.50} & 0.18 & 0.38 \\
          Berlin & 0.76 & 0.79 & 0.50 & \textbf{0.92} & 0.85 & 0.88 \\
          \bottomrule
        \end{tabular}
      \end{sc}
    \end{small}
  }
  \vskip -0.1in
\end{table*}

\subsection{Methods}

In evaluating our proposed architecture for trajectory prediction, we consider $\SCONE$ with $3$ layers, where each layer has $F_\ell=16$ hidden features.
By default, we use the $\tanh$ activation function, but we also use $\ReLU$ and sigmoid activations to compare.
In training $\SCONE$, we minimize the cross-entropy between the softmax output $\bz$ and the ground truth final nodes in each batch of training samples.%
\footnote{Specific hyperparameters and implementation details can be found in the supplementary material.
Code available at \url{https://github.com/nglaze00/SCoNe_GCN}.}%

Seeing how $\SCONE$ consists of a map $\cC_1\to\cC_1$ followed by an application of the boundary operator and a softmax function for node selection, we compare $\SCONE$ to methods that employ other natural maps $\cC_1\to\cC_1$, followed by the same selection procedure for picking a successor node.
In particular, we consider the map that projects the input chain onto the kernel of the Hodge Laplacian $\kernel(\Delta_1)$, as well as one that projects the input chain onto the kernel of the boundary map $\kernel(\partial_1)$.
The first approach reflects the hypothesis that the harmonic subspace $\kernel(\Delta_1)$~\cf{\cref{thm:hodge}} is a natural representation for trajectories~\citep{Ghosh:2018,Schaub:2020}, and the second approach does the same while ignoring the triangular structure of the simplicial complex.
We also compare to previously proposed neural networks for simplicial data: $\mathrm{SCNN}$~\citep{Ebli:2020}, and $\mathrm{S2CCNN}$~\citep{Bunch:2020}, using leaky~$\ReLU$ or $\ReLU$ nonlinearities as done by the respective authors.

As a baseline, we consider two methods rooted in learning specific sequences, rather than a general rule parameterized by operators underpinning the supporting domain.
First, we evaluate a simple Markov chain approach where, at each node, we choose its successor based on the empirically most likely successor in the training set.
Second, we apply the RNN model of~\citet{Wu:2017}, with the adjustment of not including geometric coordinates, since we consider trajectories over abstract simplicial complexes, not assuming any geometric structure underlying it.
We emphasize that these methods are both incapable of generalizing to unseen structures, since they depend on learning decision rules for each node based on training.

\subsection{Synthetic Dataset}

\parheading{Dataset.}
Following the example of \citet{Schaub:2020}, we generate a simplicial complex by drawing 400 points uniformly at random in the unit square, and then applying a Delaunay triangulation to obtain a mesh, after which we remove all nodes and edges in two regions, pictured in~\cref{fig:exp-graphs-a}.
Then, to generate a set of trajectories, we choose a random starting point in the lower-left corner, connect it via shortest path to a random point in the upper-left, center, or lower-right region, which we then connect via shortest path to a random point in the upper-right corner.
Some examples of such trajectories are shown in~\cref{fig:exp-graphs-a}.
We generate $1000$ such trajectories for our experiment, using $800$ of them for training and $200$ for testing.

\parheading{Performance.}
To start, we evaluate the performance of all methods on the standard train/test split.
As shown in \cref{tab:exp-synth}, the RNN model performs the best, since the volume of training examples allows for the model to easily learn commonly taken paths, thus leading to good performance on the test set.
Compared to this, the kernel projection methods do worse on the test set, but do far better than random guessing.
This is due to the natural interpretation of trajectories as being characterized by the harmonic subspace of the Hodge Laplacian~\citep{Ghosh:2018,Schaub:2020}.
Moreover, projecting onto the kernel of the Hodge Laplacian $\kernel(\Delta_1)$ significantly outperforms only projecting onto the kernel of the boundary map $\kernel(\partial_1)$.
This indicates that incorporating information from $\cX_2$ (triangles) is important for this problem, since it allows the harmonic subspace to capture more interesting homological structure.

Finally, we evaluate $\SCONE$ using different nonlinear activation functions and incorporation of simplicial information.
When using $\tanh$ activation functions, it is clear that including the triangles in the model improves performance, in line with what we observed for the kernel projection methods.
Moreover, we see that using the $\tanh$ activation function over the sigmoid or $\ReLU$ activation functions also improves performance, presumably due to the fact that $\tanh$ is odd, as required for admissibility in~\cref{prop:admissibility}.
Similarly, although the identity ($\mathrm{Id.}$) activation function is odd, it is linear, and thus does not satisfy~\cref{prop:admissibility}. This leads to poor performance in all experiments.
We also observe that $\SCONE$ tends to outperform the $\mathrm{SCNN}$ and $\mathrm{S2CCNN}$ models, perhaps due to the extra regularization imposed by admissibility.

\parheading{Testing Generalization.}
We demonstrate the generalization properties of $\SCONE$ in two ways, with the common feature of manipulating the training and test sets in order for them to have a mismatch in their characteristics.

First, we evaluate these methods on a ``reversed'' test set.
That is, we keep the training set the same as before, but reverse the direction of the trajectories in the test set.
Therefore, a method that is overly dependent on ``memorizing'' a particular direction for the trajectories will be expected to fare poorly, while methods that leverage more fundamental features relating to the homology of the simplicial complex are expected to perform better.
Based on the results in \cref{tab:exp-synth}, we see two things: including triangles in the architecture improves performance, and admissible methods outperform inadmissible methods.
In particular, $\SCONE$ using $\tanh$ activation and incorporating triangles performs similarly to the projection onto $\kernel(\Delta_1)$, and both of these methods outperform competing approaches.

Second, we evaluate how well $\SCONE$ generalizes to trajectories over unseen simplicial structures.
To do this, we restrict the training set to trajectories running along the upper-left region of $\cX$, and similarly restrict the testing set to trajectories spanning the lower-right region of $\cX$.
Although in this case $\SCONE$ is still being applied to the same simplicial complex that it was trained on, the locality of the architecture means that the testing set is essentially an unseen structure.
We see again in \cref{tab:exp-synth} that $\SCONE$ with $\tanh$ activation outperforms sigmoid and $\ReLU$ activations, illustrating the utility of admissibility for designing architectures that generalize well.
Note that the Markov chain and the RNN cannot be tested on unseen data.

Finally, to demonstrate the sensitivity of architectures that are not orientation equivariant, we repeat these experiments on the same dataset, except with edge orientations selected carefully in order to reflect the general direction of the training set.
That is, using the geometric position of the nodes in \cref{fig:exp-graphs-a}, we label each node based on the sum of its $x$ and $y$ coordinates, and then orient each edge to be increasing with respect to the ordering of the nodes.
This yields a set of oriented edges that ``point'' from the lower-left of the simplicial complex to the upper-right.
Since the training set consists of trajectories that also follow this direction, the coefficients in the representation of each trajectory as a vector in $\cC_1$ will be overwhelmingly non-negative.

By manipulating the edge orientation in this way, we have artificially introduced a rule that would work well for the training set, but violates orientation equivariance.
Since architectures that violate orientation equivariance are capable of learning such rules, we see in \cref{tab:exp-synth-orient} that the $\ReLU$ and sigmoidal architectures perform similarly to the admissible $\tanh$ architecture when the test set matches this rule, with the sigmoidal architecture performing marginally better than the others.
However, as soon as the data does not follow this rule, as in the ``reversed'' case, the non-admissible architectures markedly fail, with the sigmoidal and $\ReLU$ activations not satisfying orientation equivariance, and the identity activation not satisfying simplicial awareness.

\subsection{Real Data}

\parheading{Datasets.}
We consider the trajectory prediction problem for the Global Drifter Program dataset, localized around Madagascar.\footnote{Data available from NOAA/AOML at \url{http://www.aoml.noaa.gov/envids/gld/} and as supplementary material.}
This dataset consists of buoys whose coordinates are logged every 12 hours.
We tile the ocean around Madagascar with hexagons and consider the trajectory of buoys based on their presence in hexagonal tiles at each logged moment in time, following the methodology of~\citet{Schaub:2020}.
Treating each tile as a node, drawing an edge between adjacent tiles, and filling in all such planar triangles yields a natural simplicial structure, as pictured in~\cref{fig:exp-graphs-b}.
Indeed, the homology of the complex shows a large hole, corresponding to the island of Madagascar.

Additionally, we consider a map of a section of Berlin~\citep{Sturtevant:2012}, where each point on a grid is designated impassible or passible based on the presence of an obstacle, as pictured in~\cref{fig:exp-graphs-c}.
For a set of trajectories, we consider a set of $1000$ shortest paths between random pairs of points in the largest connected components, divided into an $80/20$ train/test split.
Since the geometry of this map is given as a grid, it is naturally modeled as a \emph{cubical complex,} rather than a simplicial complex.
The boundary operators in this setting are quite similar to the simplicial case, highlighting the flexibility of our proposed architecture for more general chain complexes.
We discuss the details of this in the supplemental material.

\parheading{Results.}
Across the competing methods, $\SCONE$ yields the best trajectory prediction, as shown in \cref{tab:exp-drift}.
Indeed, by using admissibility as a guiding principle in designing architectures that respect the symmetries of the underlying chain complex, as well as ensuring complete integration of the simplicial structure, we achieve greater prediction accuracy than other methods.
Interestingly, we again outperform the method of projecting onto $\kernel(\Delta_1)$, which is itself an admissible approach, suggesting that there is utility in the regularization imposed by the successive local aggregations of $\SCONE$, and in appropriate weighting of the different components of the Hodge Laplacian.

\section{Conclusion}

A core component of graph neural networks is their equivariance to permutations of the nodes of the graph on which they act.
Designing architectures that respect properties such as this enables the design of systems that transfer and scale well.
By considering additional symmetries (orientation equivariance) and truly accounting for higher-order structures (simplicial awareness), we construct principled, generalizable neural networks for data supported on simplicial complexes.

\section*{Acknowledgements}

This work was partially supported by NSF under award \href{https://www.nsf.gov/awardsearch/showAward?AWD_ID=2008555&HistoricalAwards=false}{CCF-2008555}.
Research was sponsored by the Army Research Office and was accomplished under Cooperative Agreement Number W911NF-19-2-0269.
The views and conclusions contained in this document are those of the authors and should not be interpreted as representing the official policies, either expressed or implied, of the Army Research Office or the U.S. Government.
The U.S. Government is authorized to reproduce and distribute reprints for Government purposes notwithstanding any copyright notation herein.
TMR was partially supported by the Ken Kennedy Institute 2020/21 Exxon-Mobil Graduate Fellowship.
We would like to thank Michael Schaub for helpfully providing a cleaned dataset for the Ocean Drifters experiment.

\newpage

\onecolumn

\appendix


\renewcommand{\theequation}{S-\arabic{equation}}

\renewcommand{\theprop}{S-\arabic{prop}}
\renewcommand{\thetheorem}{S-\arabic{theorem}}
\renewcommand{\thelemma}{S-\arabic{lemma}}
\renewcommand{\thecoro}{S-\arabic{coro}}
\renewcommand{\theproblem}{S-\arabic{problem}}
\renewcommand{\theassump}{S-\arabic{assump}}
\renewcommand{\theremark}{S-\arabic{remark}}
\renewcommand{\thedefn}{S-\arabic{defn}}

\renewcommand{\thefigure}{S-\arabic{figure}}
\renewcommand{\thetable}{S-\arabic{table}}

\renewcommand{\thealgorithm}{S-\arabic{algorithm}}

\section*{Supplementary Material}

In this supplement, we discuss practical concerns for the implementation of simplicial neural networks.
In \cref{app:represent}, we discuss how to represent chains as real vectors and boundary maps as matrices, as well as how to implement activation functions using this representation.
Then, using these convenient representations, in \cref{app:implement} we redefine $\SCONE$ using real vectors and matrices, rather than vectors and linear maps in the chain complex $\cC$, and then specify the hyperparameters used in our experiments.
We briefly discuss the implementation of the nullspace projection methods in \cref{app:nullspace-projection}, followed by a computational complexity analysis of $\SCONE$ in \cref{app:complexity}.
The necessity of odd, nonlinear activation functions as stated in \cref{prop:admissibility} is proven in \cref{app:proof-admissibility}.
The use of odd activation functions is contrasted with existing work in \cref{app:past-admissibility}.
We provide more details on \cref{thm:hodge} as it pertains to $\cC_1$ in \cref{app:hodge}, before finally discussing an implementation of $\SCONE$ for cubical complexes in \cref{app:cubism}.

\section{Representing Chains and Boundary Maps as Vectors and Matrices}\label{app:represent}

In \cref{alg:scone}, we specify $\SCONE$ in terms of boundary maps acting on chains interleaved with matrix multiplication from the right and activation functions.
Here, we describe  simple procedures for constructing vector representations of $k-$chains, as well as matrix representations of boundary maps that act on said representations.

Let $\cX$ be a simplicial complex over a set of nodes $\cX_0$, with edges $\cX_1$ and triangles $\cX_2$.
Begin by labeling the vertices $\cX_0$ with the integers $\{1,\ldots,n\}$.
Letting $m=|\cX_1|$, sort the edges lexicographically by their constituent nodes and label them accordingly with the integers $\{1,\ldots,m\}$.
Similarly, letting $p=|\cX_2|$, sort the triangles lexicographically by their constituent nodes and label them with the integers $\{1,\ldots,p\}$.
For each edge $e=\{i,j\}$, for $i,j\in\cX_0$, assign to $e$ the orientation $[i,j], i<j$.
Similarly, for each triangle $t=\{i,j,k\}$, assign to $t$ the orientation $[i,j,k], i<j<k$.

\subsection{Chains as Real Vectors}

We represent a $1-$chain as a vector in $\Rn{m}$.
Let $\cE=\{e_1,e_2,\ldots,e_m\}$ be the set of labeled, oriented edges, and suppose for real coefficients $\alpha_k$ we have a $1-$chain $\bc_1=\sum_{i=1}^m \alpha_i e_i$.
We identify $\bc_1$ with a vector in $\Rn{m}$, so that in this representation $[\bc_1]_i=\alpha_i$ for $1\leq i\leq m$.
Similar representations as vectors of real numbers can be derived for other $k-$chains.

\subsection{Boundary Maps as Matrices}

With chains admitting natural representations as real vectors, we represent the boundary operators as matrices.
First, recall that $\partial_1:\cC_1\to\cC_0$, and $\cC_1,\cC_0$ are $m,n-$dimensional vector spaces, respectively.
A matrix representation of $\partial_1$, denoted by the matrix $\bB_1$, then, must match these dimensions, so that $\bB_1\in\Rnp{n}{m}$.
The entries of $\bB_1$ are defined as follows.
Again, let $\cE=\{e_1,e_2,\ldots,e_m\}$ be the set of labeled, oriented edges, and let $\cV=\{v_1,v_2,\ldots,v_n\}$ be the set of labeled nodes.
Then, the entries of $\bB_1$ are given by
\begin{equation}
  [\bB_1]_{ij}=
  \begin{cases}
    -1 & e_j=[v_i,\cdot] \\
    1  & e_j=[\cdot,v_i] \\
    0  & \text{otherwise}.
  \end{cases}
\end{equation}
Observe that this aligns with the definition of the boundary map in \cref{eq:boundary}.
Under this definition, $\bB_1$ is precisely the \emph{signed incidence matrix} of the graph $(\cX_0,\cX_1)$.
Moreover, the adjoint of $\partial_1$ is represented as the transpose of this matrix, \ie{} $\bB_1^\top$.

The definition of the matrix representation of $\partial_2$ is slightly more complex.
Let $\{t_1,t_2,\ldots,t_p\}$ be the set of labeled, oriented triangles.
Since $\partial_2:\cC_2\to\cC_1$ is a map from a $p-$dimensional vector space to an $m-$dimensional vector space, the matrix representation $\bB_2$ must also match this, so that $\bB_2\in\Rnp{m}{p}$.
For each $1\leq j\leq m$ and $1\leq k\leq p$, let $e_j=[i_0,i_1]$ be the $j^\mathrm{th}$ oriented edge.
Then, the entries of $\bB_2$ are defined as
\begin{equation}
  [\bB_2]_{jk}=
  \begin{cases}
    -1 & t_k=[i_0,\cdot,i_2] \\
    1  & t_k=[\cdot,i_0,i_1] \\
    1  & t_k=[i_0,i_1,\cdot] \\
    0  & \text{otherwise}.
  \end{cases}
\end{equation}
Again, one can check that this coincides with the definition of the boundary map in \cref{eq:boundary}, and that $\bB_1\bB_2=0$ in accordance with \cref{lem:fun-thm-alg-top}.
Observe that the first Hodge Laplacian $\Delta_1$ can be written as the matrix $\bL_1=\bB_1^\top\bB_1+\bB_2\bB_2^\top$.

\subsection{Application of Elementwise Activation Functions}

We construct $\SCONE$ using an odd, elementwise activation function $\phi$ applied to vectors in $\cC_1$.
Based on the representation of chains as vectors of real numbers defined before, the application of the activation function is fairly obvious: simply apply $\phi$ to each real-valued coordinate in the vector.
For the sake of completeness, we outline here how activation functions can be applied to $1-$chains without using the intermediate representation as real-vectors.

Let $\{e_1,e_2,\ldots,e_m\}$ denote the set of $m=|\cX_1|$ oriented edges of a simplicial complex, so that any $1-$chain $\bc_1\in\cC_1$ is a unique linear combination of these oriented edges.
We extend the activation function $\phi:\R\to\R$ to act on $\cC_1$ as follows:
\begin{equation}\label{eq:elementwise}
    \phi(\bc_1)=\sum_{\ell=1}^m\phi(\langle\bc_1,e_\ell\rangle)\cdot e_\ell.
\end{equation}
Observe that this can be equivalently computed by representing $\bc_1$ as a real vector and applying $\phi$ elementwise, as desired.

\section{Implementation of SCoNe}\label{app:implement}

\begin{algorithm}[tb]
  \caption{$\SCONE$ for Trajectory Prediction Defined in Terms of Real Vectors and Matrices}
  \label{alg:scone-real}
  \begin{algorithmic}[1]
    \STATE \textbf{Input:} partial trajectory $[i_0,i_1,\ldots,i_{m-1}]$
    \STATE \textbf{Parameters:} \\
    boundary matrices $\{\bB_k\}_{k=0}^2$ for oriented edges $\cE=\{e_1,e_2,\ldots,e_m\}$ \\
    number of layers $L$ \\
    hidden dimensions $\{F_\ell\}_{\ell=0}^{L+1}, F_0=F_{L+1}=1$ \\
    weight matrices $\{\{\bW_k^\ell\in\Rnp{F_\ell}{F_{\ell+1}}\}_{\ell=0}^{L}\}_{k=0}^2$ \\
    activation function $\phi$
    \STATE \textbf{Initialize:} $\bc_1^0\in\Rn{|\cE|}, \bc_1^0=0$.
    \FOR{$j=0$ \textbf{to} $m-2$}
    \IF{$[i_j,i_{j+1}]\in\cE$}
    \STATE Choose $\ell$ such that $e_\ell=[i_j,i_{j+1}]$
    \STATE $[\bc_1^0]_\ell\gets 1$
    \ELSE
    \STATE Choose $\ell$ such that $e_\ell=[i_{j+1},i_j]$
    \STATE $[\bc_1^0]_\ell\gets -1$
    \ENDIF
    \ENDFOR
    \FOR{$\ell=0$ \textbf{to} $L-1$}
    \STATE
    \begin{equation}\label{eq:scone-layer}
      \begin{aligned}
        \bc_1^{\ell+1} &\gets \phi(\bB_2\bB_2^\top\bc_1^\ell\bW_2^\ell \\
        &\qquad+ \bc_1^\ell\bW_1^\ell \\
        &\qquad+ \bB_1^\top\bB_1\bc_1^\ell\bW_0^\ell)
      \end{aligned}
    \end{equation}
    \ENDFOR
    \STATE $\bc_0^{L+1}\gets \bB_1\bc_1^L\bW_0^L$
    \STATE $\bz\gets\mathrm{softmax}(\{[\bc_0^{L+1}]_j:j\in\cN(i_{m-1})\})$
    \STATE \textbf{Return:} $\widehat{i}_m\gets\argmax_j z_j$
  \end{algorithmic}
\end{algorithm}

With the representation of $1-$chains as real vectors and boundary maps as matrices defined, we redefine \cref{alg:scone} in terms of these real vectors and matrices, detailed in \cref{alg:scone-real}.
For each experiment, we trained $\SCONE$ using $L=3$ layers, with hidden dimensions $F_\ell=16$.
It was trained using the Adam optimizer~\citep{Kingma:2015}, using the default parameters of $\beta_1=0.9, \beta_2=0.99$, for the cross-entropy loss between the output of $\SCONE$ and the true labeled successor node, with an additional weight decay term with coefficient~\num{5e-5}~\citep{Krogh:1992}.
The remaining hyperparameters varied across experiments, and are listed in~\cref{tab:to-rule-them-all}.

\begin{table*}[tb]
\caption{
Hyperparameters for synthetic and real data experiments.
The hyperparameters for the synthetic experiments were identical between the standard and reversed settings, and changed slightly in the transfer learning setting.
Hyperparameters were selected by hand, which was reasonable due to the simple nature of the proposed architecture.
}
\label{tab:to-rule-them-all}
\vskip 0.15in
\begin{center}
\begin{small}
\begin{sc}
\begin{tabular}{lrrr}
  \toprule
  Parameter & Synthetic (Std./Rev.) & Synthetic (Trans.) & Drifters \\
  \midrule
  Learning rate & 0.001 & 0.001 & 0.0025 \\
  Training samples & 800 & 333 & 160 \\
  Test samples & 200 & 333 & 40 \\
  Epochs & 500 & 1000 & 425 \\
  \bottomrule
\end{tabular}
\end{sc}
\end{small}
\end{center}
\vskip -0.1in
\end{table*}

\section{Implementation of Nullspace Projection Methods}
\label{app:nullspace-projection}

Examining the implementation of $\SCONE$, we see that the architecture consists of a map from $1$-chains to $1$-chains, followed by the application of the boundary map $\partial_1$ to obtain a $0$-chain.
In \cref{sec:experiments}, we compared $\SCONE$ to methods based on projecting $1$-chains into the kernel of $\partial_1$ or $\Delta_1$, following the work of \citet{Schaub:2020}.
Of course, by \cref{lem:fun-thm-alg-top}, applying $\partial_1$ to this would always yield the zero vector, which is useless for prediction tasks.
Therefore, we adjusted this by constructing a version of the boundary map restricted to the edges adjacent to the terminal node.
In practice, this amounts to choosing the edge with the largest outgoing flow from $i_{m-1}$, then predicting the next node as the endpoint of that edge.

\section{Computational Complexity Analysis}
\label{app:complexity}

We establish the $\bigO(|\cX_1|F_{\ell}F_{\ell+1}+|\cX_2|\min\{F_\ell,F_{\ell+1})$ runtime of the $\ell^\mathrm{th}$ layer of $\SCONE$ here.
Observe in \cref{alg:scone-real} that the $\ell^\mathrm{th}$ layer of $\SCONE$ maps a matrix in $\Rnp{|\cX_1|}{F_\ell}$ to a matrix in $\Rnp{|\cX_1|}{F_\ell}$ by multiplying the argument by $F_\ell\times F_{\ell+1}$ matrices from the right, and combinations of boundary maps from the left, followed by an aggregation and activation step.
One such way to compute this is by considering intermediate representations $\bd^\ell$, defined as follows.
\begin{align}
  \bd_0^\ell &= \bB_1^\top\bB_1\bc_1^\ell\bW_0^\ell \\
  \bd_1^\ell &= \bc_1^\ell\bW_1^\ell \\
  \bd_2^\ell &= \bB_2\bB_2^\top\bc_1^\ell\bW_2^\ell.
\end{align}
Then, the output is computed by
\begin{equation}
  \bc_1^{\ell+1} = \phi\left(\bd_0^\ell+\bd_1^\ell+\bd_2^\ell\right).
\end{equation}
The complexity of computing $\bc_1^{\ell+1}$, then, is equal to the sum of the complexities of computing $\bd_0^\ell,\bd_1^\ell,\bd_2^\ell$ and the complexity of taking their sum and applying $\phi$.

\begin{table*}[tb]
\caption{
  Computational complexities for computing intermediate representations of a single $\SCONE$ layer.
  We indicate the order of operations in the ``Expression'' column using parentheses, which yields different complexities based on the size of each level of the simplicial complex $\cX$.
  The best conditions for each expression in terms of the relative sizes of the levels of $\cX$ are listed in the column ``Best use case.''
}
\label{tab:complexity}
\vskip 0.15in
\begin{center}
\begin{small}
\begin{sc}
\begin{tabular}{cccc}
  \toprule
  Quantity & Expression & Complexity & Best use case \\
  \midrule
  \multirow{3}{*}{$\bd_0^\ell$} & $\bB_1^\top(\bB_1(\bc_1^\ell\bW_0^\ell))$ & $\bigO(|\cX_1|F_\ell F_{\ell+1})$ & $|\cX_1|\approx|\cX_0|$\\
  & $\bB_1^\top((\bB_1\bc_1^\ell)\bW_0^\ell)$ & $\bigO(|\cX_0|F_\ell F_{\ell+1}+|\cX_1|(F_\ell+F_{\ell+1}))$ & $|\cX_1|\gg|\cX_0|$ \\
  & ($\bB_1^\top(\bB_1\bc_1^\ell))\bW_0^\ell$ & $\bigO(|\cX_1|F_\ell F_{\ell+1})$ & $|\cX_1|\approx|\cX_0|$ \\
  \midrule
  $\bd_1^\ell$ & $\bc_1^\ell\bW_1^\ell$ & $\bigO(|\cX_1|F_\ell F_{\ell+1})$ & \\
  \midrule
  \multirow{3}{*}{$\bd_2^\ell$} & $\bB_2 (\bB_2^\top(\bc_1^\ell\bW_2^\ell))$ & $\bigO(|\cX_1|F_\ell F_{\ell+1}+|\cX_2|F_{\ell+1})$ & $|\cX_2|\gg|\cX_1|$ \\
  & $\bB_2((\bB_2^\top\bc_1^\ell)\bW_2^\ell)$ & $\bigO(|\cX_2|(F_\ell F_{\ell+1}))$ & $|\cX_2|\leq|\cX_1|$ \\
  & $(\bB_2(\bB_2^\top\bc_1^\ell))\bW_2^\ell$ & $\bigO(|\cX_1|F_\ell F_{\ell+1}+|\cX_2|F_\ell)$ & $|\cX_2|\gg|\cX_1|$ \\
  \bottomrule
\end{tabular}
\end{sc}
\end{small}
\end{center}
\vskip -0.1in
\end{table*}

Observing that the matrix $\bB_1$ is typically a sparse matrix with $\bigO(|\cX_1|)$ nonzero entries, and $\bB_2$ is typically a sparse matrix with $\bigO(|\cX_2|)$ nonzero entries, complexities for computing $\bd_0^\ell,\bd_1^\ell,\bd_2^\ell$ can be derived in a straightforward manner.
We remark that the order of operations has an impact on complexity, as applying the boundary map to an input may increase or decrease the dimension, which has an impact on the application of the weight matrix $\bW_k^\ell$.
For instance, if a simplicial complex has very few triangles, one should multiply $\bc_1^\ell$ by $\bB_2^\top$ first when computing $\bd_2^\ell$, since the lower-dimensional space $\cC_2$ will have lower complexity when multiplying from the right by $\bW_2^\ell$.
We gather the complexities of all possible ways to evaluate these expressions in \cref{tab:complexity}.
Observing that each $\bd_k^\ell$ can be evaluated in $\bigO(|\cX_1|F_\ell F_{\ell+1}+|\cX_2|\min\{F_\ell,F_{\ell+1}\})$ time yields the desired complexity, since the application of $\phi$ is only $\bigO(|\cX_1|F_{\ell+1})$.

\section{Proof of \cref{prop:admissibility}}\label{app:proof-admissibility}

We provide a proof of necessary and sufficient conditions for $\SCONE$ to be an admissible architecture, as stated in \cref{prop:admissibility}.
We first establish the following auxiliary result.
\begin{prop}\label{prop:self-similar-continuity}
  Let $\phi:\R\to\R$ be a function defined on the reals.
  Suppose that for some pair of real numbers $a,b$ such that $a\neq -b$ and $a\neq b$, we have
  \begin{equation}
    \begin{aligned}
      \phi(a) &= -\phi(b) \\
      \phi(a) &\neq 0,
    \end{aligned}
  \end{equation}
  and for all $\gamma\in\R$
  \begin{equation}
    \phi(\gamma a)=-\phi(\gamma b).
  \end{equation}
  Under these conditions, $\phi$ is not continuous.
\end{prop}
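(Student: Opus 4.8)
The plan is to convert the multiplicative hypothesis into a self-similarity (scaling) relation for $\phi$, and then exhibit two sequences converging to $0$ along which $\phi$ is constant but takes opposite signs, contradicting continuity at $0$. First I would dispose of the degenerate cases: if $a=0$, then reading $\phi(\gamma a)=-\phi(\gamma b)$ at $\gamma=0$ forces $\phi(0)=-\phi(0)$, hence $\phi(0)=0=\phi(a)$, contradicting $\phi(a)\neq 0$; the symmetric argument rules out $b=0$. Thus both $a,b\neq 0$, and I may set $r:=b/a$, which satisfies $r\neq 0$ and, because $a\neq b$ and $a\neq -b$, also $r\neq\pm 1$.

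Next I would reparametrize. Writing $t=\gamma a$ (so that $t$ ranges over all of $\R$ as $\gamma$ does, since $a\neq 0$), the hypothesis $\phi(\gamma a)=-\phi(\gamma b)$ becomes the clean identity $\phi(t)=-\phi(rt)$ valid for every $t\in\R$. Composing this relation with itself gives $\phi(t)=-\phi(rt)=\phi(r^2 t)$, so $\phi$ is invariant under scaling by $s:=r^2$, where $s>0$ and $s\neq 1$. Replacing $s$ by $1/s$ if necessary, I may assume $s>1$, and iterating $\phi(t)=\phi(s^{-1}t)$ yields $\phi(s^{-n}t)=\phi(t)$ for all $n\geq 0$.

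Finally I would build the contradiction using the two sequences $a s^{-n}$ and $b s^{-n}$, both of which tend to $0$ since $s>1$. By the scaling invariance, $\phi(a s^{-n})=\phi(a)$ and $\phi(b s^{-n})=\phi(b)=-\phi(a)$ for every $n$. If $\phi$ were continuous at $0$, both sequences would force $\phi(0)$ to equal their respective constant values, giving $\phi(0)=\phi(a)$ and $\phi(0)=-\phi(a)$ at once; hence $\phi(a)=-\phi(a)$, i.e.\ $\phi(a)=0$, contradicting $\phi(a)\neq 0$. Therefore $\phi$ cannot be continuous — in fact it is already discontinuous at the single point $0$.

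The step I would be most careful about is the reduction to a \emph{positive} scaling factor: working directly with $r$ is awkward when $r<0$, since then $r^n a$ alternates in sign and the two approximating sequences would straddle $0$, muddying the continuity argument. Passing to $s=r^2>0$ sidesteps this cleanly, while the nondegeneracy $r\neq\pm 1$ guarantees $s\neq 1$, so the scaling genuinely contracts toward $0$. The degenerate-case check at the outset is the other delicate point, as it is exactly where the hypotheses $a\neq b$, $a\neq -b$, and $\phi(a)\neq 0$ must all be invoked to keep $r$ well-defined and the relation nontrivial.
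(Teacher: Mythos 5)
Your proof is correct and takes essentially the same route as the paper's: both iterate the rescaling identity generated by the ratio $b/a$ to produce points arbitrarily close to $0$ at which $\phi$ takes the two distinct values $\phi(a)$ and $-\phi(a)$, contradicting continuity at $0$. The differences are cosmetic --- the paper assumes WLOG $|a|>|b|$ and uses the single alternating sequence $c_j=\gamma^j b$ with $\gamma=b/a$, whereas you square the ratio to get a positive contraction factor $s=r^2$ and run two constant sequences; your explicit elimination of the degenerate cases $a=0$ and $b=0$ is a small point of extra care that the paper's argument leaves implicit.
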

\begin{proof}
  Since $a\neq -b$ and $a\neq b$, $|a|\neq|b|$.
  Without loss of generality, assume $|a|>|b|$, so that putting $\gamma=\frac{b}{a}$ satisfies $|\gamma|<1$.
  We then have
  \begin{align}
    \phi(\gamma a) &= \phi(b) \\
    \gamma^j b &= \gamma^{j+1} a
  \end{align}
  for all nonnegative integers $j$.
  An immediate consequence of this is that for all nonnegative integers $j$, we have
  \begin{equation}\label{eq:alternating-equality}
    \phi(a) = (-1)^j\phi\left(\gamma^j a\right).
  \end{equation}
  Consider the sequence $\{c_j\}_{j=1}^\infty$ where $c_j=\gamma^j b$ for each $j$.
  Observe that as $j\to\infty$, $c_j\to 0$ due to the fact that $|\gamma|<1$.
  Moreover, by~\eqref{eq:alternating-equality}, we have that $\phi(c_j)=-\phi(c_{j+1})$ for each $j$.
  Therefore, for all $j>0$,
  \begin{equation}
    \left|\phi(c_j)-\phi(c_{j+1})\right|=2\left|\phi(a)\right|>0.
  \end{equation}
  That is, although the sequence $\{c_j\}_{j=1}^\infty$ converges to $0$, the sequence $\{\phi(c_j)\}_{j=1}^\infty$ does not converge.
  Therefore, $\phi$ is not continuous at the point $0$, so that $\phi$ is not a continuous function, as desired.
\end{proof}

We now prove~\cref{prop:admissibility}~(restated below).
\begin{prop}[{Restatement of~\cref{prop:admissibility}}]
Assume that the activation function $\phi$ is continuous and applied elementwise.
If $\SCONE$ (as defined in~\cref{alg:scone}) is admissible, $\phi$ must be an odd and nonlinear function.
\end{prop}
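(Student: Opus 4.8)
The plan is to treat the three defining properties of admissibility (\cref{def:admissibility}) separately, showing that permutation equivariance is automatic, that orientation equivariance forces $\phi$ to be odd, and that simplicial awareness forces $\phi$ to be nonlinear; since admissibility requires all three, this yields the claim. For permutation equivariance, I would note that an elementwise $\phi$ commutes with any coordinate permutation $\bP_k$, and that each term $\bB_2\bB_2^\top$, $\bB_1^\top\bB_1$, and the identity transforms covariantly under the relabeling $[\cP\partial]_k = \bP_{k-1}\partial_k\bP_k^\top$. This is the standard argument for graph convolutions and imposes no condition on $\phi$, so I would defer its details.

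Next, for orientation equivariance, the strategy is to reduce to a single-edge reorientation, since reorientations of a basis of $\cC_1$ decompose into such moves. Fixing a diagonal $\bD_1$ that flips exactly one edge (with $\bD_0 = \bI$ and $\bD_2$ arbitrary), I would first verify the key algebraic fact that the Laplacian blocks transform covariantly, namely $[\cD\partial]_2[\cD\partial]_2^\top\bD_1 = \bD_1\partial_2\partial_2^\top$ and likewise for the lower term, using $\bD_1^2 = \bI$ and $\bD_2^2 = \bI$; this shows a layer's pre-activation maps to $\bD_1$ times itself. Orientation equivariance of the full $\cC_1\to\cC_0$ map then reduces to a sign-compatibility condition on $\phi$ at the flipped coordinate. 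Choosing weights that isolate a single feature and inputs that set the pre-activation at the flipped edge to an arbitrary value, I would extract a relation of the form $\phi(\gamma a) = -\phi(\gamma b)$ holding for every scaling $\gamma$ induced by the weight matrices. The continuity hypothesis is what upgrades this family of relations to genuine oddness: by \cref{prop:self-similar-continuity}, any solution with $a \neq \pm b$ and $\phi(a)\neq 0$ would make $\phi$ discontinuous, so the only admissible continuous possibility is $a = -b$, i.e. $\phi(-x) = -\phi(x)$ for all $x$. Composing orientation-equivariant layers then gives orientation equivariance of $\SCONE$.

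Finally, for simplicial awareness I would argue the contrapositive: suppose $\phi$ is linear. Having already shown $\phi$ is odd, linearity means $\phi(x) = cx$, and absorbing $c$ into the weights lets me take $\phi = \mathrm{Id}$ without loss of generality. Writing an arbitrary input via \cref{thm:hodge} as $\bc_1^0 = \partial_1^\top\bw + \bx + \partial_2\by$ with $\bx\in\kernel(\Delta_1)$, I would propagate it through the $L$ linear layers and apply the final boundary map. Using \cref{lem:fun-thm-alg-top}, i.e. $\bB_1\bB_2 = 0$, repeatedly to annihilate every contribution of $\partial_2$, the curl and harmonic parts drop out and the output collapses to $\bc_0^{L+1} = \partial_1(\mathrm{Id} + \partial_1\partial_1^\top)^L\bw$, which does not depend on $\partial_2$. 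Hence no choice of triangle set changes the output for a fixed input, violating simplicial awareness of order $2$; therefore admissibility forces $\phi$ to be nonlinear.

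The main obstacle is the oddness step, specifically the passage from the scaled sign-flip relations produced by orientation equivariance to the clean identity $\phi(-x) = -\phi(x)$. The naive hope is that a single-edge flip directly equates $\phi$ at $x$ and at $-x$; the difficulty is that the output constraint is only imposed after the boundary map $\bB_1$, whose kernel could in principle absorb a nonzero discrepancy, and that the weight-induced scalings couple the constraint across magnitudes rather than fixing it at one point. Reconciling this with continuity is exactly the purpose of \cref{prop:self-similar-continuity}, which I would use as the crux lemma to convert the self-similar sign condition into oddness.
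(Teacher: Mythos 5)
Your proposal follows the same route as the paper's proof: permutation equivariance dispatched as the standard elementwise-commutation argument, oddness extracted from orientation equivariance with \cref{prop:self-similar-continuity} as the crux lemma for upgrading the scaled sign relations under continuity, and nonlinearity obtained by the contrapositive Hodge-decomposition argument, where \cref{lem:fun-thm-alg-top} kills the curl and harmonic parts so that the output collapses to $\partial_1(\mathrm{Id}+\partial_1\partial_1^\top)^L\bw$, independent of $\partial_2$. The simplicial-awareness part is essentially identical to the paper's, and is fine.

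There is, however, one genuine loose end, and it is exactly the step you flag as the main obstacle. You claim that \cref{prop:self-similar-continuity} reconciles the fact that ``the output constraint is only imposed after the boundary map $\bB_1$, whose kernel could in principle absorb a nonzero discrepancy.'' It does not: that lemma only addresses your \emph{second} difficulty (the weight-induced scalings coupling the sign condition across magnitudes); it says nothing about discrepancies hiding in $\kernel(\partial_1)$. The paper resolves the kernel issue by a different device, namely the choice of test complex: two nodes joined by a single edge, so that $\cC_1$ is one-dimensional, $\partial_1$ is injective, and non-equivariance of the chain-level map $\cC_1\to\cC_1$ automatically implies non-equivariance of the full map to $\cC_0$. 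On that complex the paper sets $\bW_1^\ell=\bW_2^\ell=0$, reduces everything to scalars, and runs an induction over the number of layers: the base case follows directly from $\phi(x)\neq-\phi(-x)$, and the inductive step splits into two cases --- the two orientation branches are equal (then a suitable scalar weight exposes non-equivariance since $\phi(x)\neq 0$), or they are unequal and not negatives of one another (then equivariance for every choice of last-layer weight would contradict continuity via \cref{prop:self-similar-continuity}). Your plan, which works with a single flipped edge on an apparently general complex, needs either this choice of complex or a substitute argument; note that while for $L=1$ the discrepancy is supported only on the flipped edge $e_j$ and $\partial_1 e_j\neq 0$, for $L\geq 2$ the discrepancy spreads to neighboring edges and the kernel objection becomes real. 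Relatedly, your case analysis is incomplete: \cref{prop:self-similar-continuity} excludes $a=\pm b$, so besides $a=-b$ you must also dispose of the branch $a=b$ (the paper's first inductive case), which your sketch silently skips.
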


\begin{proof}
We prove this statement by first establishing that $\SCONE$ is permutation equivariant by virtue of $\phi$ being applied elementwise, and then show that orientation equivariance and simplicial awareness are satisfied only if $\phi$ is odd and nonlinear, respectively.
  
\parheading{Permutation equivariance.}

Let $\{e_1,e_2,\ldots,e_m\}$ be a chosen orientation of edges for a simplicial complex $\cX$, where each $\{e_\ell\}_{j=1}^m$ is an oriented $1$-simplex (and thus is a $1$-chain).
The activation function $\phi$ is applied to arbitrary $\bc_1\in\cC_1$ as follows:
\begin{equation}\label{eq:activate-applicate}
  \phi(\bc_1)=\sum_{j=1}^m\phi(\langle\bc_1,e_\ell\rangle)e_\ell.
\end{equation}
Observe that $\phi$ commutes with permutation matrices, \ie{} $\phi(\bP_1\bc_1)=\bP_1\phi(\bc_1)$.
In particular, we take
\begin{equation}\label{eq:scone-activation-permute1}
  \bc_1^{\ell+1} \gets \phi(\partial_2\partial_2^\top\bc_1^\ell\bW_2^\ell        + \bc_1^\ell\bW_1^\ell + \partial_1^\top\partial_1\bc_1^\ell\bW_0^\ell).
\end{equation}
Letting $\cP=\{\bP_k\}_{k=0}^2$ as in~\cref{property:perm-equi}, we consider a version of~\eqref{eq:scone-activation-permute1} where the input $\bc_1^\ell$ and the boundary maps are permuted by $\cP$:
\begin{equation}\label{eq:scone-activation-permute2}
  \begin{gathered}
    \phi(\bP_1\partial_2\bP_2^\top\bP_2\partial_2^\top\bP_1^\top\bP_1\bc_1^\ell\bW_2^\ell + \bP_1\bc_1^\ell\bW_1^\ell + \bP_1\partial_1^\top\bP_0^\top\bP_0\partial_1\bP_1^\top\bP_1\bc_1^\ell\bW_0^\ell) = \\
    \phi(\bP_1\partial_2\partial_2^\top\bc_1^\ell\bW_2^\ell + \bP_1\bc_1^\ell\bW_1^\ell + \bP_1\partial_1^\top\partial_1\bc_1^\ell\bW_0^\ell) = \\
    \phi(\bP_1(\partial_2\partial_2^\top\bc_1^\ell\bW_2^\ell + \bc_1^\ell\bW_1^\ell + \partial_1^\top\partial_1\bc_1^\ell\bW_0^\ell)) = \\
    \bP_1\phi(\partial_2\partial_2^\top\bc_1^\ell\bW_2^\ell + \bc_1^\ell\bW_1^\ell + \partial_1^\top\partial_1\bc_1^\ell\bW_0^\ell) = \\
    \bP_1\bc_1^{\ell+1}.
  \end{gathered}
\end{equation}
That is to say, each layer of $\SCONE$ is equivariant to joint permutations of the input and the boundary maps, so that the entirety of $\SCONE$ is permutation equivariant, as desired.

\parheading{Orientation equivariance.}

To show that $\phi$ being odd is a necessary condition for orientation equivariance, suppose that $\phi:\R\to\R$ is not odd and is continuous.
Then, there exists $x\in\R$ such that $\phi(x)\neq-\phi(-x)$.
This implies that either $\phi(x)$ or $\phi(-x)$ is nonzero.
Without loss of generality, then, suppose $\phi(x)\neq 0$.

Take $\cX$ to be a simplicial complex with two nodes and a single edge connecting them: $\cX=\{\{i_0\},\{i_1\},\{i_0,i_1\}\}$.
Finding it convenient to represent $1$-chains in this case as real numbers, let $\bc_1^0=x$ be a $1$-chain, and let $\{\{\bW_k^\ell\}_{\ell=0}^L\}_{k=0}^2$ all be contained in $\Rnn{1}$, so that their application is equivalent to scalar multiplication.
Set $\bW_1^\ell=\bW_2^\ell=0$ for all $\ell$, and denote by $w^\ell$ the scalar component of $\bW_0^\ell$ for each $\ell$.
We now show inductively that for any nonnegative integer $L$, there exists an $L$-layer $\SCONE$ architecture with coefficients $\{w^\ell\}_{\ell=0}^{L-1}$ that does not satisfy orientation equivariance.

For the base case, let $L=1$, and consider a $1$-layer $\SCONE$ architecture:
\begin{equation}
  \bc_1^1 = \phi(w^0\bc_1^0).
\end{equation}
Setting $w^0=1$ yields $\bc_1^1=\phi(x)$.
One can easily see that this is not orientation equivariant, since $\phi(x)\neq-\phi(-x)$.

For the inductive step, suppose that $L>1$, and that a $\SCONE$ architecture with $L-1$ layers and coefficients $\{w^\ell\}_{\ell=0}^{L-2}$ is not orientation equivariant for the input $\bc_1^0$: denote the differently oriented outputs as $\bc_{1+}^{L-1}$ and $\bc_{1-}^{L-1}$, so that $\bc_{1+}^{L-1}\neq-\bc_{1-}^{L-1}$.
If $\bc_{1+}^{L-1}=\bc_{1-}^{L-1}$, implying that $\bc_{1+}^{L-1}\neq 0$, take $w^{L-1}=x/\bc_{1+}^{L-1}$, so that
\begin{equation}
  \phi(w^{L-1}\bc_{1+}^{L-1})=\phi(x)\neq-\phi(x)=-(w^{L-1}\bc_{1-}^{L-1}),
\end{equation}
yielding an architecture that is not orientation equivariant.
Otherwise, suppose for the sake of contradiction that $\bc_{1+}^{L-1}\neq\bc_{1-}^{L-1}$ and for all $w^{L-1}\in\R$ we have orientation equivariance for the input $\bc_1^0$.
That is,
\begin{equation}
  \phi(w^{L-1}\bc_{1+}^{L-1}) = -\phi(w^{L-1}\bc_{1-}^{L-1}).
\end{equation}
Since at least one of the $1$-chains $\bc_{1+}^{L-1},\bc_{1-}^{L-1}$ is nonzero, one can always choose $w^{L-1}$ such that $\phi(w^{L-1}\bc_{1?}^{L-1})$ is nonzero, where $\bc_{1?}^{L-1}$ denotes said nonzero $1$-chain.
By \cref{prop:self-similar-continuity}, this implies that $\phi$ is not a continuous function, yielding a contradiction, as desired.
Thus, under these conditions, there exists a $\SCONE$ architecture that is not orientation equivariant.

\parheading{Simplicial awareness.}

Finally, we consider simplicial awareness of order $2$.
Suppose $\phi$ is an odd, linear function: it is sufficient to assume that $\phi$ is the identity map.
Let a $1$-chain $\bc_1^0\in\cC_1$ be given arbitrarily.
By \cref{thm:hodge}, there exists $\bw\in\cC_0, \bx\in\kernel(\Delta_1), \by\in\cC_2$ such that $\bc_1^0=\partial_1^\top\bw+\bx+\partial_2\by$.
Some simple algebra, coupled with \cref{lem:fun-thm-alg-top}, shows that when $\phi$ is the identity map, the $0$-chain at the output of $\SCONE$ is given by
\begin{equation}
  \bc_0^{L+1}=\partial_1\sum_{j=0}^L(\partial_1^\top\partial_1)^L\partial_1^\top\bw\omega(j),
\end{equation}
where each $\omega(j)$ can be written as a polynomial of the weight matrices (always yielding a $1\times 1$ matrix, due to the constraint $F_0=F_{L+1}=1$).
That is, the output of $\SCONE$ does not depend on $\partial_2$, and thus fails to fulfill simplicial awareness of order $2$.
However, if $\phi$ is nonlinear, \cref{lem:fun-thm-alg-top} does not come in to effect, since $\partial_1\circ\phi\circ\partial_2\neq 0$, allowing for simplicial awareness, as desired.
\end{proof}

\section{Admissibility of Previous Simplicial Neural Networks}\label{app:past-admissibility}

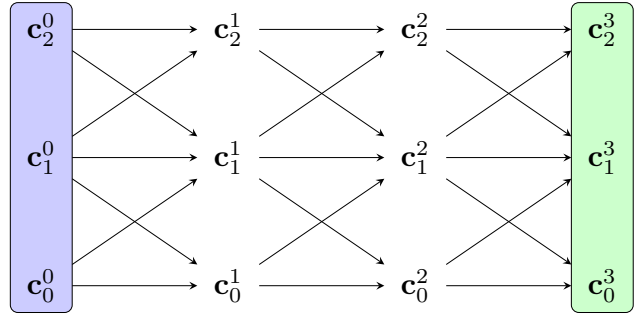
\begin{figure*}
  \centering
  \resizebox{0.5\linewidth}{!}{\begin{tikzpicture}[every node/.style={scale=1.2},rounded corners=3pt]
  \matrix (m) [matrix of math nodes,row sep=3em,column sep=5em,minimum width=2em]
  {
    \mathbf{c}_2^0 & \mathbf{c}_2^1 & \mathbf{c}_2^2 & \mathbf{c}_2^3 \\
    \mathbf{c}_1^0 & \mathbf{c}_1^1 & \mathbf{c}_1^2 & \mathbf{c}_1^3 \\
    \mathbf{c}_0^0 & \mathbf{c}_0^1 & \mathbf{c}_0^2 & \mathbf{c}_0^3 \\
  };
  
  \path[-stealth]
  (m-1-1)
  edge (m-1-2)
  edge (m-2-2)
  
  (m-2-1)
  edge (m-1-2)
  edge (m-2-2)
  edge (m-3-2)

  (m-3-1)
  edge (m-2-2)
  edge (m-3-2)

  (m-1-2)
  edge (m-1-3)
  edge (m-2-3)

  (m-2-2)
  edge (m-1-3)
  edge (m-2-3)
  edge (m-3-3)

  (m-3-2)
  edge (m-2-3)
  edge (m-3-3)

  (m-1-3)
  edge (m-1-4)
  edge (m-2-4)

  (m-2-3)
  edge (m-1-4)
  edge (m-2-4)
  edge (m-3-4)

  (m-3-3)
  edge (m-2-4)
  edge (m-3-4);

  \begin{scope}[on background layer]
    \filldraw[fill=blue,fill opacity=0.2]
    (m-3-1.south west) rectangle (m-1-1.north east);
    
    \filldraw[fill=green,fill opacity=0.2]
    (m-3-4.south west) rectangle (m-1-4.north east);
  \end{scope}
    
\end{tikzpicture}

  \caption{
    General structure of the simplicial $2$-complex convolutional neural network of \citet{Bunch:2020}.
    Each horizontal arrow corresponds to the application of a normalized Hodge Laplacian, and diagonal arrows correspond to the application of normalized boundary or coboundary maps.
    At each node, the inputs are summed then passed through the activation function $\phi$.
  }
  \label{fig:bunch}
\end{figure*}

Simplicial neural networks similar to $\SCONE$ were previously proposed by \citet{Ebli:2020,Bunch:2020}.
The convolutional layer for $k$-(co)chains of \citet{Ebli:2020} takes the following form:
\begin{equation}\label{eq:ebli-layer}
  \bc_k^{\ell+1}\gets\phi\left(\sum_{j=0}^N\Delta_k^j\bc_k^\ell\bW_j^\ell\right),
\end{equation}
where $\phi$ is an activation function applied according to the standard coordinate basis as in \eqref{eq:elementwise}, and $\{\bW_j^\ell\}$ are trainable weight matrices.
Observe that \eqref{eq:ebli-layer} takes a form similar to that of $\SCONE$, but using polynomials of the entire Hodge Laplacian instead of considering the components $\partial_k^\top\partial_k$ and $\partial_{k+1}\partial_{k+1}^\top$ separately.
A similar argument to the proof of \cref{prop:admissibility} shows that the simplicial neural network architecture of \citet{Ebli:2020} is orientation equivariant for elementwise nonlinearity $\phi$ if and only if $\phi$ is an odd function (when $k>0$), and simplicial awareness of orders $k-1$ and $k+1$ is satisfied as well.
In particular, for a $2$-dimensional simplicial complex, simplicial awareness is satisfied when $k=1$, as in $\SCONE$.
Notably, since their architecture does not involve a ``readout'' mapping a $1$-chain to a $0$-chain, $\phi$ does not have to be nonlinear, unlike the case of $\SCONE$.
We would like to remark that the original authors used the ``Leaky~ReLU'' activation function in their empirical studies, which is not odd, thus failing to satisfy orientation equivariance.

The simplicial $2$-complex convolutional neural network of \citet{Bunch:2020} also employs convolutional layers based on the boundary maps, albeit using normalized versions of said maps following \citet{Schaub:2020}.
Rather than restricting their internal states $\bc^\ell$ to be supported on a single level of the simplicial complex as in \citet{Ebli:2020} and $\SCONE$, they propose an architecture that maintains representations on all levels of the simplicial complex.
Ignoring the details of normalizing the boundary maps and Hodge Laplacians, their convolutional layer takes the form
\begin{align}
  \bc_0^{\ell+1}&\gets\phi\left(\partial_1\bc_1^\ell\bW_\ell^{0,1}+\Delta_0\bc_0^\ell\bW_\ell^{0,0}\right)\\
  \bc_1^{\ell+1}&\gets\phi\left(\partial_2\bc_2^\ell\bW_\ell^{1,2}+\Delta_1\bc_1^\ell\bW_\ell^{1,1}+\partial_1^\top\bc_0^\ell\bW_\ell^{1,0}\right)\\
  \bc_2^{\ell+1}&\gets\phi\left(\Delta_2\bc_2^\ell\bW_\ell^{2,2}+\partial_2^\top\bc_1^\ell\bW_\ell^{2,1}\right),
\end{align}
where $\bc_0^\ell,\bc_1^\ell,\bc_2^\ell$ are the input $0,1,2$-chains, respectively, and $\phi$ is an elementwise activation function.
We illustrate the structure of this architecture in \cref{fig:bunch}.
Again, a convolutional neural network composed of such layers follows a result similar to \cref{prop:admissibility}, where it is admissible if and only if $\phi$ is odd.
If one considers the chains stored at all levels of the simplicial complex, $\phi$ does not need to be nonlinear, unlike $\SCONE$.
However, if one only takes \emph{one} of the levels of the chain complex as output, similar to the output of $\SCONE$ or the architecture of \citet{Ebli:2020}, the requirement of $\phi$ being nonlinear comes into effect.

\section{Chains, Flows, and the Hodge Decomposition}\label{app:hodge}

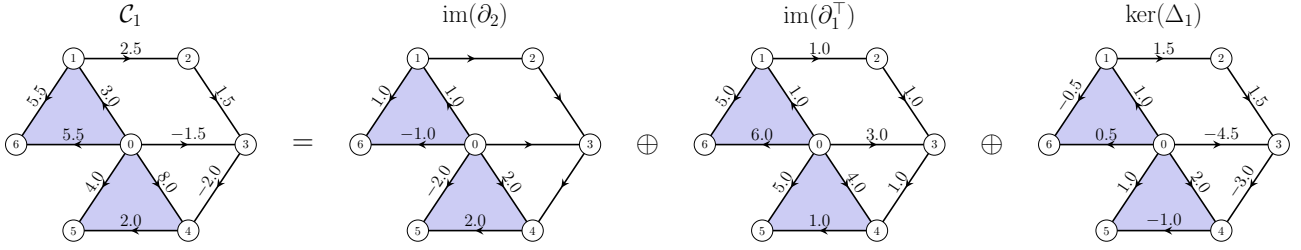
\begin{figure*}[tb!]
    \centering
    \resizebox{\linewidth}{!}{\begin{tikzpicture}[every node/.style={scale=0.6},rounded corners=3pt]

  \def \bigshift {6cm}
  \def \xshift {1cm}
  \def \yshift {1.5cm}
  
  \foreach \x/\y/\n in {0/0/0, -1/1/1, 1/1/2, 2/0/3, 1/-1/4, -1/-1/5, -2/0/6}
  {
    \foreach \type/\s in {flow/0,curl/1,grad/2,harm/3}
    {
      \coordinate (\type\n) at (\x*\xshift+\s*\bigshift,\y*\yshift);
    }
  }
  
  \begin{scope}[thick,decoration={
    markings,
    mark=at position 0.5 with {\arrow{stealth}}}
    ]
    \foreach \type in {flow,curl,grad,harm}
    {
      \draw[postaction={decorate}] (\type0)--(\type1);
      \draw[postaction={decorate}] (\type0)--(\type3);
      \draw[postaction={decorate}] (\type0)--(\type4);
      \draw[postaction={decorate}] (\type0)--(\type5);
      \draw[postaction={decorate}] (\type0)--(\type6);
      \draw[postaction={decorate}] (\type1)--(\type2);
      \draw[postaction={decorate}] (\type1)--(\type6);
      \draw[postaction={decorate}] (\type2)--(\type3);
      \draw[postaction={decorate}] (\type3)--(\type4);
      \draw[postaction={decorate}] (\type4)--(\type5);
    }
  \end{scope}
  
  \foreach \n in {0,...,6}
  {
    \foreach \type in {flow,curl,grad,harm}
    {
      \node[draw, circle, fill=white] at (\type\n) {$\n$};
    }
  }
  
  \begin{scope}[on background layer]
    \foreach \type in {flow,curl,grad,harm}
    {
      \fill[fill=blue!80!black, fill opacity=0.2] (\type0) -- (\type1) -- (\type6) -- cycle;
      \fill[fill=blue!80!black, fill opacity=0.2] (\type0) -- (\type4) -- (\type5) -- cycle;
    }
  \end{scope}
  
  \node at (0.5*\bigshift,0) {\Huge$=$};
  \node at (1.5*\bigshift,0) {\Huge$\oplus$};
  \node at (2.5*\bigshift,0) {\Huge$\oplus$};
  
  \node at (0,1.5*\yshift) {\huge$\cC_1$};
  \node at (\bigshift,1.5*\yshift) {\huge$\image(\partial_2)$};
  \node at (2*\bigshift,1.5*\yshift) {\huge$\image(\partial_1^\top)$};
  \node at (3*\bigshift,1.5*\yshift) {\huge$\kernel(\Delta_1)$};
  
  
  \draw[draw=none] (curl0) -- node[above, sloped] {\Large$1.0$} (curl1);
  \draw[draw=none] (curl1) -- node[above, sloped] {\Large$1.0$} (curl6);
  \draw[draw=none] (curl0) -- node[above, sloped] {\Large$-1.0$} (curl6);
  
  \draw[draw=none] (curl0) -- node[above, sloped] {\Large$2.0$} (curl4);
  \draw[draw=none] (curl4) -- node[above, sloped] {\Large$2.0$} (curl5);
  \draw[draw=none] (curl0) -- node[above, sloped] {\Large$-2.0$} (curl5);
  
  \draw[draw=none] (grad0) -- node[above, sloped] {\Large$1.0$} (grad1);
  \draw[draw=none] (grad0) -- node[above, sloped] {\Large$3.0$} (grad3);
  \draw[draw=none] (grad0) -- node[above, sloped] {\Large$4.0$} (grad4);
  \draw[draw=none] (grad0) -- node[above, sloped] {\Large$5.0$} (grad5);
  \draw[draw=none] (grad0) -- node[above, sloped] {\Large$6.0$} (grad6);
  \draw[draw=none] (grad1) -- node[above, sloped] {\Large$1.0$} (grad2);
  \draw[draw=none] (grad1) -- node[above, sloped] {\Large$5.0$} (grad6);
  \draw[draw=none] (grad2) -- node[above, sloped] {\Large$1.0$} (grad3);
  \draw[draw=none] (grad3) -- node[above, sloped] {\Large$1.0$} (grad4);
  \draw[draw=none] (grad4) -- node[above, sloped] {\Large$1.0$} (grad5);
  
  \draw[draw=none] (harm0) -- node[above, sloped] {\Large$1.0$} (harm1);
  \draw[draw=none] (harm0) -- node[above, sloped] {\Large$-4.5$} (harm3);
  \draw[draw=none] (harm0) -- node[above, sloped] {\Large$2.0$} (harm4);
  \draw[draw=none] (harm0) -- node[above, sloped] {\Large$1.0$} (harm5);
  \draw[draw=none] (harm0) -- node[above, sloped] {\Large$0.5$} (harm6);
  \draw[draw=none] (harm1) -- node[above, sloped] {\Large$1.5$} (harm2);
  \draw[draw=none] (harm1) -- node[above, sloped] {\Large$-0.5$} (harm6);
  \draw[draw=none] (harm2) -- node[above, sloped] {\Large$1.5$} (harm3);
  \draw[draw=none] (harm3) -- node[above, sloped] {\Large$-3.0$} (harm4);
  \draw[draw=none] (harm4) -- node[above, sloped] {\Large$-1.0$} (harm5);
  
  \draw[draw=none] (flow0) -- node[above, sloped] {\Large$3.0$} (flow1);
  \draw[draw=none] (flow0) -- node[above, sloped] {\Large$-1.5$} (flow3);
  \draw[draw=none] (flow0) -- node[above, sloped] {\Large$8.0$} (flow4);
  \draw[draw=none] (flow0) -- node[above, sloped] {\Large$4.0$} (flow5);
  \draw[draw=none] (flow0) -- node[above, sloped] {\Large$5.5$} (flow6);
  \draw[draw=none] (flow1) -- node[above, sloped] {\Large$2.5$} (flow2);
  \draw[draw=none] (flow1) -- node[above, sloped] {\Large$5.5$} (flow6);
  \draw[draw=none] (flow2) -- node[above, sloped] {\Large$1.5$} (flow3);
  \draw[draw=none] (flow3) -- node[above, sloped] {\Large$-2.0$} (flow4);
  \draw[draw=none] (flow4) -- node[above, sloped] {\Large$2.0$} (flow5);
  
\end{tikzpicture}

    \caption{
    Illustration of the Hodge Decomposition for an example $1$-chain.
    The subspace $\image(\partial_2)$ consists of $1$-chains that are curly around $2$-simplices, while $\image(\partial_1^\top)$ is determined by the differences between nodal coefficients in a $0$-chain.
    $\kernel(\Delta_1)$ corresponds to $1$-chains that are not curly, and not determined by coefficients in a $0$-chain, unlike $\image(\partial_1^\top)$.
    Figure adapted from \cite{Schaub:2020}.
    }
    \label{fig:decomposition}
\end{figure*}

We elaborate here on using $1$-chains to model flows on the edges of a simplicial complex, viewed through the lens of the Hodge Decomposition~\cf{\cref{thm:hodge}}.
Suppose we have a simplicial complex $\cX$, on which there is a \emph{flow} over the edges.
We interpret a flow as having a magnitude and an orientation.
Analogously to an electrical circuit, the magnitude of the flow is the absolute current flowing through a wire, and the orientation is determined by the sign of the measurement as well as the direction it is being measured in: that is, if the measurement direction is reversed, the sign of the measurement will change.
This skew-symmetric property is reflected by the vector space of $1$-chains over the real numbers, since for any oriented edge $[i,j]$, we have $[i,j]=-[j,i]$.
Indeed, this behavior mirrors that of a directional derivative on a surface: reversing the direction of the derivative of a function merely changes the sign.
Thus, we will use the term ``$1$-chain'' and ``flow'' interchangeably.

With this model in mind, we now provide an interpretation of \cref{thm:hodge}.
First, it will be useful to understand what \emph{integrating over a path} means in this context.
Let $S=\{i_\ell\}_{\ell=1}^m$ be a sequence of nodes, such that any two nodes that are adjacent in the sequence are also adjacent in $\cX$: we call such a sequence a \emph{path}.
If $S$ is such that the final node in the sequence is the same as the first node, we call $S$ a \emph{closed path}.
Now, let $\bc_1\in\cC_1$ be a $1$-chain, and let $\bc_S\in\cC_1$ represent $S$ as a $1$-chain in the following way:
\begin{equation}
    \bc_S = \sum_{\ell=1}^{m-1}[i_j,i_{j+1}].
\end{equation}
Then, we say that the integral of $\bc_1$ over the path $S$ is the inner product
\begin{equation}
    \langle\bc_1,\bc_S\rangle = \sum_{\ell=1}^{m-1}\langle\bc_1,[i_\ell,i_{\ell+1}]\rangle.
\end{equation}

We call the subspace $\image(\partial_2)$ \emph{curly}, since it corresponds to flows around $2$-simplices.
As pictured in~\cref{fig:decomposition}, these flows are supported strictly on the boundary of $2$-simplices, dictated by a curl about each $2$-simplex.
In particular, the pictured flow has a curl of $1.0$ about the $2$-simplex $[0,1,6]$, and a curl of $2.0$ about $[0,4,5]$.
Elsewhere, the flow takes value $0$.

The subspace $\image(\partial_1^\top)$ is referred to as \emph{gradient}, since it corresponds to flows induced by differences between so-called ``potentials'' at each node.
That is, for each $\bc_{grad}\in\image(\partial_1^\top)$ and oriented simplex $[i,j]$, there exists a $\bc_0\in\cC_0$ such that
\begin{equation}
    \langle\bc_{grad},[i,j]\rangle = \langle\bc_0,[j]\rangle - \langle\bc_0,[i]\rangle.
\end{equation}
In~\cref{fig:decomposition}, each node has a potential corresponding to its label, \eg{} node $3$ satisfies $\langle\bc_0,[3]\rangle=3.0$, so that $\langle\bc_{grad},[0,3]\rangle=3.0$.
The integral of a gradient flow, analogous to the line integral of a vector field that is the gradient of a scalar field, is \emph{path independent}, in that the integral over a path of a gradient flow only depends on the starting and end points.
That is, if $S$ and $S'$ are both paths whose initial and terminal points are the same, then for any gradient flow $\bc_{grad}\in\image(\partial_1^\top)$,
\begin{equation}
    \langle\bc_{grad},\bc_{S}\rangle=\langle\bc_{grad},\bc_{S'}\rangle.
\end{equation}
In particular, the integral of a gradient flow over a closed loop $S''$ is null: $\langle\bc_{grad},\bc_{S''}\rangle=0$.

Finally, the subspace $\kernel(\Delta_1)$ consists of \emph{harmonic} flows.
Harmonic flows satisfy two properties: the integral of a harmonic flow around a $2$-simplex is null, and the sum of the flows incident to any node is null, as exemplified in~\cref{fig:decomposition}.
More precisely, for any $\bc_{harm}\in\kernel(\Delta_1),[i]\in\cC_0, [j_0,j_1,j_2]\in\cC_2$,
\begin{align}
    \langle\bc_{harm},\partial_2[j_0,j_1,j_2]\rangle&=0 \\
    \langle\partial_1\bc_{harm},[i]\rangle&=0.
\end{align}

\section{Hodge Laplacians for Cubical $2$-Complexes}
\label{app:cubism}

In the same way that an abstract simplicial complex can be thought of as a set of points, edges, triangles, tetrahedra, etc., with the property of being closed under restriction, a cubical complex is a natural analog constructed from points, edges, squares, cubes, hypercubes, and so on.
Cubical complexes arise when considering grid-structured domains~\citep{Wagner:2012}, and particularly in \cref{sec:experiments} when considering the Berlin map data.
Defining appropriate boundary maps for cubical complexes of high dimension can be tedious, so we restrict our discussion to cubical complexes of dimension $2$, or cubical $2$-complexes.

We adapt the definition of \citet{Farley:2003} to more naturally capture the notion of orientation and boundary.
The \emph{standard abstract $k$-cube} is the set $\{0,1\}^k$.
By convention, we say $\{0,1\}^0=\{0\}$.
The \emph{faces} of the standard abstract $k$-cube are the sets taking the form $\prod_{j=1}^k A_j$, where each $A_j$ is a nonempty subset of $\{0,1\}^k$ with $|A_j|=1$ for exactly one index $j$.
By convention, we say that $\{0\}$ is a face of itself.
For a finite set $\sigma$ paired with a bijection $\psi_\sigma:\sigma\to\{0,1\}^k$, we say that a subset $\sigma'\subseteq\sigma$ is a \emph{face of $\sigma$ with respect to $\psi_\sigma$} if the image of $\sigma'$ under $\psi_\sigma$ is a face of $\{0,1\}^k$.

A \emph{cubical $2$-complex} $\cX$ over a set $\cV$ is a multiset of subsets of $\cV$ paired with a set of bijections $\{\psi_\sigma:\sigma\to\{0,1\}^{k_\sigma}\}_{\sigma\in\cX}$, where $k_\sigma\in\{0,1,2\}$ for each $\sigma\in\cX$, with the following properties:
\begin{enumerate}
\item $\cX$ covers $\cV$.
\item For each $\sigma\in\cX$, if $\sigma'\subseteq\sigma$, then $\sigma'\in\cX$ if and only if $\sigma'$ is a face of $\sigma$ with respect to $\psi_\sigma$.
\end{enumerate}
We denote the set of elements of $\cX$ with $2^k$ elements as $\cX_k$: the elements of $\cX_k$ are naturally referred to as $k$-cubes, due to the existence of a bijection with the standard abstract $k$-cube.

Note the similarities and differences with the case of an abstract simplicial complex: for a simplex in an abstract simplicial complex, all of its subsets are faces, so that all of its faces are contained in the abstract simplicial complex due to closure under restriction.
In the case of an abstract cubical complex, we only require closure under restriction \emph{to faces} with respect to the bijections $\psi_\sigma$.
This is more general than closure under restriction, and in fact subsumes closure under restriction for abstract simplicial complexes.
We illustrate this in \cref{fig:cubes}.

In order to define a boundary operator, we first define an appropriate notion of orientation.
For $0$-cubes and $1$-cubes, an orientation is the exact same as in the case of a simplicial complex.
That is, if $\sigma=\{i_0\}$ for some $i_0\in\cV$, the only orientation of $\sigma$ is $[i_0]$.
Similarly, if $\sigma=\{i_0,i_1\}$, there are two orientations of $\sigma$: $[i_0,i_1]$ and $[i_1,i_0]$.
When $\sigma\in\cX_2$, extra restrictions on the notion of orientations are needed.
Suppose $\sigma=\{i_0,i_1,i_2,i_3\}$ is an element of $\cX_2$.
An ordered sequence of the elements of $\sigma$ is said to be an orientation with respect to $\psi_\sigma$ if each pair of cyclically adjacent elements in the sequence forms a face of $\sigma$ with respect to $\psi_\sigma$.
Based on adjacency being considered in a cyclic fashion, we take these orientations modulo cyclic permutations.
By convention, we say that ``reversals'' of an orientation change the sign, so that $[i_0,i_1,i_2,i_3]=-[i_3,i_2,i_1,i_0]$.
One can check that modulo cyclic permutations, these are the only two orientations of a $2$-cube.

As before, denote by $\cC_k$ the vector space with the oriented $k$-simplices of $\cX$ as a canonical orthonormal basis, defined over the field of real numbers.
The boundary map $\partial_2:\cC_2\to\cC_1$ of an oriented $2$-cube is defined as follows:
\begin{equation}
  \partial_2([i_0,i_1,i_2,i_3]) = [i_0,i_1] + [i_1,i_2] + [i_2,i_3] - [i_0,i_3].
\end{equation}
The boundary map $\partial_1$ is defined as before:
\begin{equation}
  \partial_1([i_0,i_1]) = [i_1]-[i_0].
\end{equation}
In this setting, \cref{lem:fun-thm-alg-top} holds: $\partial_1\partial_2=0$.
Moreover, taking the adjoint of the operators $\partial_1$ and $\partial_2$, we can construct the $k^\mathrm{th}$ cubical Hodge Laplacian in the expected way:
\begin{equation}
  \Delta_k = \partial_k^\top\partial_k+\partial_{k+1}\partial_{k+1}^\top,
\end{equation}
with the corresponding cubical analog to the Hodge Decomposition:
\begin{equation}
  \cC_k = \image(\partial_{k+1})\oplus\image(\partial_k^\top)\oplus\kernel(\Delta_k).
\end{equation}

Given the representation of boundary maps acting on formal sums of oriented cubes in an abstract cubical complex, architectures analogous to $\SCONE$ follow naturally via simple substitutions of the boundary maps.

\begin{figure*}
  \centering
  \hspace*{\fill}
  \begin{minipage}[c][5cm][c]{0.4\linewidth}
    \resizebox{\linewidth}{!}{\begin{tikzpicture}[every node/.style={scale=0.6},rounded corners=3pt]

  \def \bigshift {2cm}
  \def \xshift {1.5cm}
  \def \yshift {1.5cm}
  
  \foreach \x/\y/\n in {-1/1/0, 1/1/1, 1/-1/2, -1/-1/3}
  {
    \foreach \type/\s in {loop/-1,cross/1}
    {
      \coordinate (\type\n) at (\s*\bigshift+\x*\xshift,\y*\yshift);
    }
  }

  \coordinate (crosscenter) at (\bigshift,0);
  
  \begin{scope}[thick,decoration={
    markings,
    mark=at position 0.3 with {\arrow{stealth}}}
    ]
    \draw[postaction={decorate}] (loop0)--(loop1);
    \draw[postaction={decorate}] (loop1)--(loop2);
    \draw[postaction={decorate}] (loop2)--(loop3);
    \draw[postaction={decorate}] (loop3)--(loop0);

    \draw[postaction={decorate}] (cross0)--(cross1);
    \draw[postaction={decorate}] (cross1)--(cross3);
    \draw[postaction={decorate}] (cross3)--(cross2);
    \draw[postaction={decorate}] (cross2)--(cross0);
  \end{scope}
  
  \foreach \n in {0,...,3}
  {
    \foreach \type in {loop,cross}
    {
      \node[draw, circle, fill=white] at (\type\n) {\Large$i_\n$};
    }
  }
  
  \begin{scope}[on background layer]
    \fill[fill=blue!80!black, fill opacity=0.2] (loop0) -- (loop1) -- (loop2) -- (loop3) -- cycle;
    \fill[fill=blue!80!black, fill opacity=0.2] (cross0) -- (cross1) -- (crosscenter) -- cycle;
    \fill[fill=blue!80!black, fill opacity=0.2] (cross2) -- (cross3) -- (crosscenter) -- cycle;
  \end{scope}
  
  \node at (-\bigshift,-1.5*\yshift) {\huge$\sigma_1$};
  \node at (0,-1.5*\yshift) {\huge$=$};
  \node at (\bigshift,-1.5*\yshift) {\huge$\sigma_2$};

  \node at (-\bigshift,-2*\yshift) {\huge$[i_0,i_1,i_2,i_3]$};
  \node at (0,-2*\yshift) {\huge$\neq$};
  \node at (\bigshift,-2*\yshift) {\huge$[i_0,i_1,i_3,i_2]$};
  
\end{tikzpicture}

  \end{minipage}
  \hfill
  \begin{minipage}[c][5cm][t]{0.18\linewidth}
    \vspace{0.25cm}
    \resizebox{\linewidth}{!}{
      \begin{tabular}{c|cc}
        & $\psi_1$ & $\psi_2$ \\
        \hline
        $i_0$ & $(0,0)$ & $(0,0)$ \\
        $i_1$ & $(0,1)$ & $(0,1)$ \\
        $i_2$ & $(1,1)$ & $(1,0)$ \\
        $i_3$ & $(1,0)$ & $(1,1)$ \\
      \end{tabular}
    }
  \end{minipage}
  \hfill
  \begin{minipage}[c][5cm][t]{0.2\linewidth}
    \vspace{0.4cm}
    \resizebox{\linewidth}{!}{\begin{tikzpicture}[every node/.style={scale=0.6},rounded corners=3pt]

  \def \xshift {1.5cm}
  \def \yshift {1.5cm}
  
  \foreach \x/\y/\n in {-1/1/0, 1/1/1, 1/-1/2, -1/-1/3}
  {
    \coordinate (cube\n) at (\x*\xshift,\y*\yshift);
  }

  \begin{scope}[thick,dashed]
    \draw[] (cube0)--(cube1);
    \draw[] (cube1)--(cube2);
    \draw[] (cube2)--(cube3);
    \draw[] (cube3)--(cube0);
  \end{scope}
  
  \node[fill=white] at (cube0) {\Large$(0,0)$};
  \node[fill=white] at (cube1) {\Large$(0,1)$};
  \node[fill=white] at (cube2) {\Large$(1,1)$};
  \node[fill=white] at (cube3) {\Large$(1,0)$};
  
  \node at (0,-1.8*\yshift) {\huge$\{0,1\}^2$};
  
\end{tikzpicture}

  \end{minipage}
  \hspace*{\fill}
  \caption{
    A set does not a $2$-cube make.
    The defined bijection between an element of a cubical $2$-complex and the standard $2$-cube dictates what its faces are.
    Pictured are two $2$-cubes over the same set of nodes, but with different bijections $\psi_1,\psi_2$ between the standard abstract $2$-cube.
    Although each $2$-cube is defined over the same set of vertices, the faces induced by the bijections $\psi_1,\psi_2$ differ.
  }
  \label{fig:cubes}
\end{figure*}
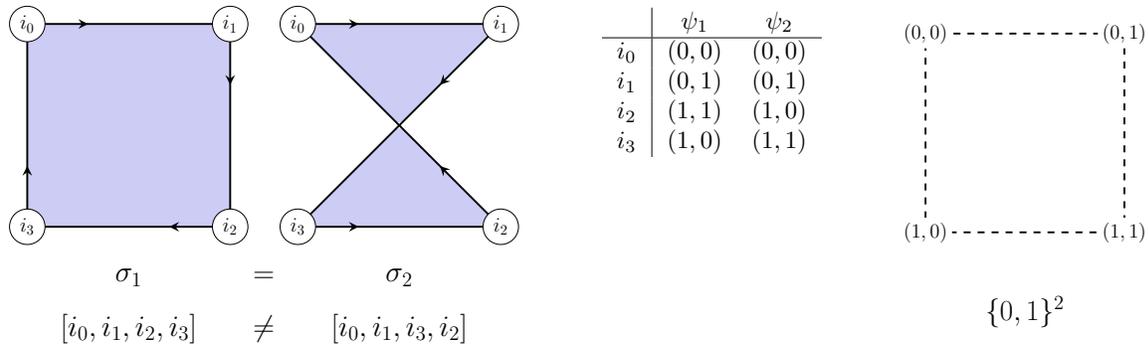

\bibliography{ref}

\begin{thebibliography}{36}
\providecommand{\natexlab}[1]{#1}
\providecommand{\url}[1]{\texttt{#1}}
\expandafter\ifx\csname urlstyle\endcsname\relax
  \providecommand{\doi}[1]{doi: #1}\else
  \providecommand{\doi}{doi: \begingroup \urlstyle{rm}\Url}\fi

\bibitem[Barbarossa \& Sardellitti(2020{\natexlab{a}})Barbarossa and
  Sardellitti]{Barbarossa:2020a}
Barbarossa, S. and Sardellitti, S.
\newblock Topological signal processing over simplicial complexes.
\newblock \emph{IEEE Transactions on Signal Processing}, 68:\penalty0
  2992--3007, 2020{\natexlab{a}}.
\newblock \doi{10.1109/TSP.2020.2981920}.

\bibitem[Barbarossa \& Sardellitti(2020{\natexlab{b}})Barbarossa and
  Sardellitti]{Barbarossa:2020b}
Barbarossa, S. and Sardellitti, S.
\newblock Topological signal processing: Making sense of data building on
  multiway relations.
\newblock \emph{IEEE Signal Processing Magazine}, 37\penalty0 (6):\penalty0
  174--183, 2020{\natexlab{b}}.
\newblock \doi{10.1109/MSP.2020.3014067}.

\bibitem[Barbarossa et~al.(2018)Barbarossa, Sardellitti, and
  Ceci]{Barbarossa:2018}
Barbarossa, S., Sardellitti, S., and Ceci, E.
\newblock Learning from signals defined over simplicial complexes.
\newblock In \emph{2018 IEEE Data Science Workshop (DSW)}, pp.\  51--55. IEEE,
  2018.
\newblock \doi{10.1109/DSW.2018.8439885}.

\bibitem[Benson et~al.(2016)Benson, Kumar, and Tomkins]{Benson:2016b}
Benson, A.~R., Kumar, R., and Tomkins, A.
\newblock Modeling user consumption sequences.
\newblock In \emph{International Conference on World Wide Web}, pp.\  519--529,
  2016.
\newblock \doi{10.1145/2872427.2883024}.

\bibitem[Bruna et~al.(2014)Bruna, Zaremba, Szlam, and LeCun]{Bruna:2014}
Bruna, J., Zaremba, W., Szlam, A., and LeCun, Y.
\newblock Spectral networks and locally connected networks on graphs.
\newblock In \emph{International Conference on Learning Representations}, 2014.

\bibitem[Bunch et~al.(2020)Bunch, You, Fung, and Singh]{Bunch:2020}
Bunch, E., You, Q., Fung, G., and Singh, V.
\newblock Simplicial 2-complex convolutional neural networks.
\newblock In \emph{NeurIPS Workshop on Topological Data Analysis and Beyond},
  2020.

\bibitem[Carlsson(2009)]{Carlsson:2009}
Carlsson, G.
\newblock Topology and data.
\newblock \emph{Bulletin of the American Mathematical Society}, 46\penalty0
  (2):\penalty0 255--308, 2009.
\newblock \doi{10.1090/S0273-0979-09-01249-X}.

\bibitem[Chung(1997)]{Chung:1997}
Chung, F.~R.
\newblock \emph{Spectral graph theory}.
\newblock Number~92. American Mathematical Society, 1997.
\newblock \doi{10.1090/cbms/092}.

\bibitem[Cohen \& Welling(2016)Cohen and Welling]{Cohen:2016}
Cohen, T. and Welling, M.
\newblock Group equivariant convolutional networks.
\newblock In \emph{International Conference on Machine Learning}, pp.\
  2990--2999. PMLR, 2016.

\bibitem[Cordonnier \& Loukas(2019)Cordonnier and Loukas]{Cordonnier:2019}
Cordonnier, J.-B. and Loukas, A.
\newblock Extrapolating paths with graph neural networks.
\newblock In \emph{International Joint Conference on Artificial Intelligence},
  2019.
\newblock \doi{10.24963/ijcai.2019/303}.

\bibitem[Defferrard et~al.(2016)Defferrard, Bresson, and
  Vandergheynst]{Defferrard:2016}
Defferrard, M., Bresson, X., and Vandergheynst, P.
\newblock Convolutional neural networks on graphs with fast localized spectral
  filtering.
\newblock \emph{Advances in Neural Information Processing Systems},
  29:\penalty0 3844--3852, 2016.

\bibitem[Ebli et~al.(2020)Ebli, Defferrard, and Spreemann]{Ebli:2020}
Ebli, S., Defferrard, M., and Spreemann, G.
\newblock Simplicial neural networks.
\newblock In \emph{NeurIPS Workshop on Topological Data Analysis and Beyond},
  2020.

\bibitem[Farley(2003)]{Farley:2003}
Farley, D.~S.
\newblock Finiteness and $\mathrm{CAT}(0)$ properties of diagram groups.
\newblock \emph{Topology}, 42\penalty0 (5):\penalty0 1065--1082, 2003.
\newblock \doi{10.1016/S0040-9383(02)00029-0}.

\bibitem[Ghosh et~al.(2018)Ghosh, Rozemberczki, Ramamoorthy, and
  Sarkar]{Ghosh:2018}
Ghosh, A., Rozemberczki, B., Ramamoorthy, S., and Sarkar, R.
\newblock Topological signatures for fast mobility analysis.
\newblock In \emph{ACM SIGSPATIAL International Conference on Advances in
  Geographic Information Systems}, pp.\  159--168, 2018.
\newblock \doi{10.1145/3274895.3274952}.

\bibitem[Ghrist(2014)]{Ghrist:2014}
Ghrist, R.~W.
\newblock \emph{Elementary applied topology}, volume~1.
\newblock Createspace Seattle, 2014.

\bibitem[Hamilton et~al.(2017)Hamilton, Ying, and Leskovec]{Hamilton:2017}
Hamilton, W., Ying, Z., and Leskovec, J.
\newblock Inductive representation learning on large graphs.
\newblock In \emph{Advances in Neural Information Processing Systems}, pp.\
  1024--1034, 2017.

\bibitem[Hamilton(2020)]{Hamilton:2020}
Hamilton, W.~L.
\newblock \emph{Graph representation learning}, volume~14.
\newblock Morgan \& Claypool Publishers, 2020.
\newblock \doi{10.2200/S01045ED1V01Y202009AIM046}.

\bibitem[Hatcher(2002)]{Hatcher:2005}
Hatcher, A.
\newblock \emph{Algebraic topology}.
\newblock Cambridge University Press, 2002.

\bibitem[Jia et~al.(2019)Jia, Schaub, Segarra, and Benson]{Jia:2019}
Jia, J., Schaub, M.~T., Segarra, S., and Benson, A.~R.
\newblock Graph-based semi-supervised \& active learning for edge flows.
\newblock In \emph{ACM International Conference on Knowledge Discovery and Data
  Mining}, pp.\  761--771, 2019.
\newblock \doi{10.1145/3292500.3330872}.

\bibitem[Jiang et~al.(2011)Jiang, Lim, Yao, and Ye]{Jiang:2011}
Jiang, X., Lim, L.-H., Yao, Y., and Ye, Y.
\newblock Statistical ranking and combinatorial {Hodge} theory.
\newblock \emph{Mathematical Programming}, 127\penalty0 (1):\penalty0 203--244,
  2011.
\newblock \doi{10.1007/s10107-010-0419-x}.

\bibitem[Kingma \& Ba(2015)Kingma and Ba]{Kingma:2015}
Kingma, D.~P. and Ba, J.
\newblock Adam: A method for stochastic optimization.
\newblock In \emph{International Conference on Learning Representations}, 2015.

\bibitem[Kipf \& Welling(2017)Kipf and Welling]{Kipf:2017}
Kipf, T.~N. and Welling, M.
\newblock Semi-supervised classification with graph convolutional networks.
\newblock In \emph{International Conference on Learning Representations}, 2017.

\bibitem[Krogh \& Hertz(1992)Krogh and Hertz]{Krogh:1992}
Krogh, A. and Hertz, J.~A.
\newblock A simple weight decay can improve generalization.
\newblock In \emph{Advances in Neural Information Processing Systems}, pp.\
  950--957, 1992.

\bibitem[Lim(2020)]{Lim:2020}
Lim, L.-H.
\newblock {Hodge Laplacians} on graphs.
\newblock \emph{SIAM Review}, 62\penalty0 (3):\penalty0 685--715, 2020.
\newblock \doi{10.1137/18M1223101}.

\bibitem[Neuh{\"a}user et~al.(2020{\natexlab{a}})Neuh{\"a}user, Mellor, and
  Lambiotte]{Neuhauser:2020a}
Neuh{\"a}user, L., Mellor, A., and Lambiotte, R.
\newblock Multibody interactions and nonlinear consensus dynamics on networked
  systems.
\newblock \emph{Physical Review E}, 101\penalty0 (3):\penalty0 032310,
  2020{\natexlab{a}}.
\newblock \doi{10.1103/PhysRevE.101.032310}.

\bibitem[Neuh{\"a}user et~al.(2020{\natexlab{b}})Neuh{\"a}user, Schaub, Mellor,
  and Lambiotte]{Neuhauser:2020b}
Neuh{\"a}user, L., Schaub, M.~T., Mellor, A., and Lambiotte, R.
\newblock Opinion dynamics with multi-body interactions.
\newblock \emph{arXiv preprint}, 2020{\natexlab{b}},
  arXiv:\href{https://arxiv.org/abs/2004.00901}{2004.00901}.

\bibitem[Roddenberry \& Segarra(2019)Roddenberry and Segarra]{Roddenberry:2019}
Roddenberry, T.~M. and Segarra, S.
\newblock {HodgeNet}: Graph neural networks for edge data.
\newblock In \emph{Asilomar Conference on Signals, Systems, and Computers},
  pp.\  220--224. IEEE, 2019.
\newblock \doi{10.1109/IEEECONF44664.2019.9049000}.

\bibitem[Schaub \& Segarra(2018)Schaub and Segarra]{Schaub:2018}
Schaub, M.~T. and Segarra, S.
\newblock Flow smoothing and denoising: Graph signal processing in the
  edge-space.
\newblock In \emph{IEEE Global Conference on Signal and Information
  Processing}, pp.\  735--739. IEEE, 2018.
\newblock \doi{10.1109/GlobalSIP.2018.8646701}.

\bibitem[Schaub et~al.(2020)Schaub, Benson, Horn, Lippner, and
  Jadbabaie]{Schaub:2020}
Schaub, M.~T., Benson, A.~R., Horn, P., Lippner, G., and Jadbabaie, A.
\newblock Random walks on simplicial complexes and the normalized {Hodge
  1-Laplacian}.
\newblock \emph{SIAM Review}, 62\penalty0 (2):\penalty0 353--391, 2020.
\newblock \doi{10.1137/18M1201019}.

\bibitem[Shuman et~al.(2013)Shuman, Narang, Frossard, Ortega, and
  Vandergheynst]{Shuman:2013}
Shuman, D.~I., Narang, S.~K., Frossard, P., Ortega, A., and Vandergheynst, P.
\newblock The emerging field of signal processing on graphs: Extending
  high-dimensional data analysis to networks and other irregular domains.
\newblock \emph{IEEE Signal Processing Magazine}, 30\penalty0 (3):\penalty0
  83--98, Apr. 2013.
\newblock \doi{10.1109/MSP.2012.2235192}.

\bibitem[Sturtevant(2012)]{Sturtevant:2012}
Sturtevant, N.
\newblock Benchmarks for grid-based pathfinding.
\newblock \emph{Transactions on Computational Intelligence and AI in Games},
  4\penalty0 (2):\penalty0 144 -- 148, 2012.
\newblock \doi{10.1109/TCIAIG.2012.2197681}.
\newblock URL \url{http://web.cs.du.edu/~sturtevant/papers/benchmarks.pdf}.

\bibitem[Veli{\v{c}}kovi{\'{c}} et~al.(2018)Veli{\v{c}}kovi{\'{c}}, Cucurull,
  Casanova, Romero, Li{\`{o}}, and Bengio]{Velickovic:2018}
Veli{\v{c}}kovi{\'{c}}, P., Cucurull, G., Casanova, A., Romero, A., Li{\`{o}},
  P., and Bengio, Y.
\newblock Graph attention networks.
\newblock In \emph{International Conference on Learning Representations}, 2018.

\bibitem[Wagner et~al.(2012)Wagner, Chen, and Vu{\c{c}}ini]{Wagner:2012}
Wagner, H., Chen, C., and Vu{\c{c}}ini, E.
\newblock Efficient computation of persistent homology for cubical data.
\newblock In \emph{Topological methods in data analysis and visualization II},
  pp.\  91--106. Springer, 2012.

\bibitem[Wu et~al.(2017)Wu, Chen, Sun, Zheng, and Wang]{Wu:2017}
Wu, H., Chen, Z., Sun, W., Zheng, B., and Wang, W.
\newblock Modeling trajectories with recurrent neural networks.
\newblock In \emph{International Joint Conference on Artificial Intelligence},
  2017.
\newblock \doi{10.24963/ijcai.2017/430}.

\bibitem[Wu et~al.(2020)Wu, Pan, Chen, Long, Zhang, and Philip]{Wu:2020}
Wu, Z., Pan, S., Chen, F., Long, G., Zhang, C., and Philip, S.~Y.
\newblock A comprehensive survey on graph neural networks.
\newblock \emph{IEEE Transactions on Neural Networks and Learning Systems},
  2020.
\newblock \doi{10.1109/TNNLS.2020.2978386}.

\bibitem[Zhang \& Chen(2018)Zhang and Chen]{Zhang:2018}
Zhang, M. and Chen, Y.
\newblock Link prediction based on graph neural networks.
\newblock In \emph{Advances in Neural Information Processing Systems}, pp.\
  5165--5175, 2018.

\end{thebibliography}
\bibliographystyle{icml2020}

\end{document}